%% file: main.tex
\pdfoutput=1
\RequirePackage[T1]{fontenc}
\documentclass[12pt]{article}
\usepackage[skip=1em plus 0.2em minus 0.1em]{parskip}
\usepackage[height=8.85in,width=6.45in]{geometry}

\usepackage[utf8]{inputenc}
\usepackage{amsmath}
\usepackage{amssymb}
\usepackage{mathtools}
\numberwithin{equation}{section}
\usepackage{slashed}
\usepackage{braket}
\usepackage{enumitem}
\usepackage[svgnames]{xcolor}
\usepackage[colorlinks,citecolor=DarkGreen,linkcolor=FireBrick,urlcolor=FireBrick,linktocpage,unicode]{hyperref}
\usepackage{subcaption}

\usepackage{tocloft}

\usepackage{footmisc}
\input{macros.tex}

\definecolor{blue}{named}{black}

\newcommand*{\email}[1]{\footnote{\href{mailto:#1}{\texttt{#1}}}}

\setlist[itemize,enumerate]{
  parsep=\parskip,                                   %
  itemsep=\dimexpr .3em - \parskip\relax plus 2pt,   %
  topsep=\dimexpr 6pt - \parskip\relax plus 1pt minus 1pt,
  partopsep=0pt,
  listparindent=\parindent
}

\begin{document}
\begin{titlepage}

\begin{flushright}
Last Update: April 5, 2026
\end{flushright}

\vskip 2.5em
\begin{center}

{
\LARGE \bfseries %
\begin{spacing}{1.15} %
\input{title} %
\end{spacing}
}

\vskip 1em
{ 
\begin{spacing}{1.5}
Maojiang 
Su$^{\dagger}$\email{smj@u.northerstern.edu}
\quad
Po-Chung 
Hsieh$^{\ddag}$\email{b10505001@ntu.edu.tw}
\quad
Weimin Wu$^{\dagger}$\email{wwm@u.northerstern.edu}
\quad
Mingcheng Lu\email{2860215400pp@gmail.com}
\quad
\\
Jiunhau Chen$^{\sharp}$\email{b13202007@ntu.edu.tw}
\quad
Jerry Yao-Chieh 
Hu$^{\dagger}$\email{jhu@u.northwestern.edu}
\quad
Han Liu$^{\dagger\S}$\email{hanliu@northwestern.edu}
\end{spacing}
}

\def\thefootnote{*}
\footnotetext{
Code release upon acceptance.
}

\vskip 1em

{\small
\begin{tabular}{ll}
 $^\dagger\;$Center for Foundation Models and Generative AI, Northwestern University, Evanston, IL 60208, USA\\
 \hphantom{$^\ddag\;$}Department of Computer Science, Northwestern University, Evanston, IL 60208, USA\\
 $^\ddag\;$Department of Electrical Engineering, National Taiwan University, Taipei 10617, Taiwan\\
 $^\sharp\;$Department of Physics, National Taiwan University, Taipei 10617, Taiwan\\
 $^\S\;$Department of Statistics and Data Science, Northwestern University, Evanston, IL 60208, USA
\end{tabular}}

\end{center}

\noindent
\input{0abstract}

\end{titlepage}

\section{Introduction}
\label{sec:intro}
\input{1intro}

\section{Preliminaries}
\label{sec:background}
\input{2background}

\section{DoMinO: Discrete Flow Matching Policy Optimization}
\label{sec:method}
\input{2method}

\section{Experimental Studies}
\label{sec:exp}\input{3exp}

\section{Discussion and Conclusion}
\label{sec:conclusion}\input{4conclusion}

\section*{Impact Statement}
\input{impact}

\section*{Acknowledgments}
\input{x_acknowledgments}

\newpage
\appendix
\label{sec:append}
\part*{Appendix}
{
\setlength{\parskip}{-0em}
\startcontents[sections]
\printcontents[sections]{ }{1}{}
}

\input{appendix}

\clearpage
\def\arxivfont{\rm}
\bibliographystyle{plainnat}

\bibliography{refs}

\end{document}

%% file: macros.tex
\allowdisplaybreaks
\usepackage{graphicx}
\usepackage{tikz}
\usepackage{tikz-cd}
\usepackage{times}
 
\usepackage{bm}
\usepackage{physics}
\usepackage{xcolor}
\usepackage{natbib}
\usepackage{mdframed}
\usepackage{nicefrac}
\usepackage{booktabs}
\usepackage{lipsum}
\usepackage{titlesec}
\usepackage{wrapfig,lipsum,booktabs}
\usepackage{authblk}
\usepackage{blindtext}
\usepackage[font=small]{caption}

\usepackage{algorithm}
\usepackage{algpseudocode}
\newcommand{\commentsymbol}{//}%
\algrenewcommand\algorithmiccomment[1]{\hfill {\footnotesize \commentsymbol{} #1}}

\usepackage{titletoc}
\usepackage{todonotes}
\usepackage{authblk}
\usepackage{setspace}
\usepackage{dsfont} %

\usepackage[nameinlink,capitalize,noabbrev]{cleveref}

\usepackage{CJK}

\usepackage{array}
\usepackage{bbm}
\usepackage{makecell}

\usepackage{dashbox}
\usepackage{xcolor}
\usepackage{colortbl}
\definecolor{lightyellow}{rgb}{1.0, 0.95, 0.7}
\definecolor{Blue}{rgb}{0, 0, 0.8}
\definecolor{blue}{rgb}{0,0,1}
\definecolor{darkgreen}{rgb}{0,0.40,0}
\definecolor{firebrick}{rgb}{0.698,0.133,0.133}

\definecolor{colorA}{rgb}{1,0,0}
\definecolor{colorB}{rgb}{0,0.3,1}
\definecolor{colorC}{rgb}{0.9,0.8,0.2}
\definecolor{colorD}{rgb}{0,0.65,0}
\definecolor{lesslightgray}{rgb}{0.5,0.5,0.5}
\definecolor{light-gray}{gray}{0.95}

\let\tilde\widetilde
\let\hat\widehat

\def\R{\mathbb{R}}

\DeclareMathOperator*{\E}{{\mathbb{E}}} %

\let\cite\citep 

\usepackage{amsthm}
\usepackage[many]{tcolorbox}
\usepackage{etoolbox}
\BeforeBeginEnvironment{proof}{\par\vspace{-1\baselineskip}}
\AfterEndEnvironment  {proof}{\par\vspace{-1.5\baselineskip}}

\newtheoremstyle{theoremstyle}
  {.5\baselineskip} %
  {.5\baselineskip} %
  {}                  %
  {}                  %
  {\bfseries}        %
  {.}                 %
  {1em}               %
  {}                  %

\theoremstyle{theoremstyle}
\newtheorem{theorem}{Theorem}[section]
\newtheorem{lemma}{Lemma}[section]

\newtheorem{proposition}{Proposition}[section]

\tcolorboxenvironment{theorem}{
  breakable,
  colback=black!10,
  colframe=white,%
  width=\dimexpr\linewidth+10pt\relax,%
  enlarge left by=-5pt,%
  enlarge right by=-5pt,%
  boxsep=5pt,%
  boxrule=0pt,
  left=0pt,right=0pt,top=0pt,bottom=0pt,
  sharp corners,
  before skip=0.5\baselineskip, %
  after skip=0.5\baselineskip,  %
  fonttitle=\bfseries, %
  coltitle=black %
}
\tcolorboxenvironment{remark}{
  blanker,
  breakable,
  before skip=.8\baselineskip,  %
  after  skip=.8\baselineskip   %
}

\tcolorboxenvironment{proposition}{
  breakable,
  colback=black!10,
  colframe=white,%
  width=\dimexpr\linewidth+10pt\relax,%
  enlarge left by=-5pt,%
  enlarge right by=-5pt,%
  boxsep=5pt,%
  boxrule=0pt,
  left=0pt,right=0pt,top=0pt,bottom=0pt,
  sharp corners,
  before skip=0.5\baselineskip, %
  after skip=0.5\baselineskip,  %
  fonttitle=\bfseries, %
  coltitle=black %
}

\tcolorboxenvironment{lemma}{
  breakable,
  colback=black!10,
  colframe=white,%
  width=\dimexpr\linewidth+10pt\relax,%
  enlarge left by=-5pt,%
  enlarge right by=-5pt,%
  boxsep=5pt,%
  boxrule=0pt,
  left=0pt,right=0pt,top=0pt,bottom=0pt,
  sharp corners,
  before skip=0.5\baselineskip, %
  after skip=0.5\baselineskip,  %
  fonttitle=\bfseries, %
  coltitle=black %
}

\tcolorboxenvironment{corollary}{
  breakable,
  colback=black!10,
  colframe=white,%
  width=\dimexpr\linewidth+10pt\relax,%
  enlarge left by=-5pt,%
  enlarge right by=-5pt,%
  boxsep=5pt,%
  boxrule=0pt,
  left=0pt,right=0pt,top=0pt,bottom=0pt,
  sharp corners,
  before skip=0.5\baselineskip, %
  after skip=0.5\baselineskip,  %
  fonttitle=\bfseries, %
  coltitle=black %
}

\tcolorboxenvironment{definition}{
  breakable,
  colback=black!10,
  colframe=white,%
  width=\dimexpr\linewidth+10pt\relax,%
  enlarge left by=-5pt,%
  enlarge right by=-5pt,%
  boxsep=5pt,%
  boxrule=0pt,
  left=0pt,right=0pt,top=0pt,bottom=0pt,
  sharp corners,
  before skip=0.5\baselineskip, %
  after skip=0.5\baselineskip,  %
  fonttitle=\bfseries, %
  coltitle=black %
}
\tcolorboxenvironment{assumption}{
  breakable,
  colback=black!10,
  colframe=white,%
  width=\dimexpr\linewidth+10pt\relax,%
  enlarge left by=-5pt,%
  enlarge right by=-5pt,%
  boxsep=5pt,%
  boxrule=0pt,
  left=0pt,right=0pt,top=0pt,bottom=0pt,
  sharp corners,
  before skip=0.5\baselineskip, %
  after skip=0.5\baselineskip,  %
  fonttitle=\bfseries, %
  coltitle=black %
}

\crefname{theorem}{Theorem}{Theorems}
\crefname{proposition}{Proposition}{Propositions}
\crefname{lemma}{Lemma}{Lemmas}
\crefname{corollary}{Corollary}{Corollaries}
\crefname{definition}{Definition}{Definitions}
\crefname{assumption}{Assumption}{Assumptions}
\crefname{remark}{Remark}{Remarks}
\crefname{problem}{Problem}{Problems}
\crefname{property}{Property}{property}

\tcolorboxenvironment{hypothesis}{
  breakable,
  colback=black!10,
  colframe=white,%
  width=\dimexpr\linewidth+10pt\relax,%
  enlarge left by=-5pt,%
  enlarge right by=-5pt,%
  boxsep=5pt,%
  boxrule=0pt,
  left=0pt,right=0pt,top=0pt,bottom=0pt,
  sharp corners,
  before skip=0.5\baselineskip, %
  after skip=0.5\baselineskip,  %
  fonttitle=\bfseries, %
  coltitle=black %
}
\crefname{hypothesis}{Hypothesis}{Hypothesises}

\tcolorboxenvironment{fact}{
  breakable,
  colback=black!10,
  colframe=white,%
  width=\dimexpr\linewidth+10pt\relax,%
  enlarge left by=-5pt,%
  enlarge right by=-5pt,%
  boxsep=5pt,%
  boxrule=0pt,
  left=0pt,right=0pt,top=0pt,bottom=0pt,
  sharp corners,
  before skip=0.5\baselineskip, %
  after skip=0.5\baselineskip,  %
  fonttitle=\bfseries, %
  coltitle=black %
}
\crefname{fact}{Fact}{Facts}

\tcolorboxenvironment{example}{
  breakable,
  colback=black!10,
  colframe=white,%
  width=\dimexpr\linewidth+10pt\relax,%
  enlarge left by=-5pt,%
  enlarge right by=-5pt,%
  boxsep=5pt,%
  boxrule=0pt,
  left=0pt,right=0pt,top=0pt,bottom=0pt,
  sharp corners,
  before skip=0.5\baselineskip, %
  after skip=0.5\baselineskip,  %
  fonttitle=\bfseries, %
  coltitle=black %
}
\crefname{example}{Example}{Examples}

\tcolorboxenvironment{question}{
  breakable,
  colback=black!10,
  colframe=white,%
  width=\dimexpr\linewidth+10pt\relax,%
  enlarge left by=-5pt,%
  enlarge right by=-5pt,%
  boxsep=5pt,%
  boxrule=0pt,
  left=0pt,right=0pt,top=0pt,bottom=0pt,
  sharp corners,
  before skip=0.5\baselineskip, %
  after skip=0.5\baselineskip,  %
  fonttitle=\bfseries, %
  coltitle=black %
}

\crefname{question}{Question}{Questions}
\numberwithin{equation}{section}
\numberwithin{theorem}{section}
\numberwithin{proposition}{section}
\numberwithin{definition}{section}
\numberwithin{lemma}{section}
\numberwithin{assumption}{section}
\numberwithin{remark}{section}

\usepackage{lipsum}

\newcommand*{\annot}[1]{\tag*{\footnotesize{\textcolor{black!50}{\big(#1\big)}}}}

\makeatletter
\let\save@mathaccent\mathaccent
\newcommand*\if@single[3]{%
    \setbox0\hbox{${\mathaccent"0362{#1}}^H$}%
    \setbox2\hbox{${\mathaccent"0362{\kern0pt#1}}^H$}%
    \ifdim\ht0=\ht2 #3\else #2\fi
}
\newcommand*\rel@kern[1]{\kern#1\dimexpr\macc@kerna}
\newcommand*\widebar[1]{\@ifnextchar^{{\wide@bar{#1}{0}}}{\wide@bar{#1}{1}}}
\newcommand*\wide@bar[2]{\if@single{#1}{\wide@bar@{#1}{#2}{1}}{\wide@bar@{#1}{#2}{2}}}
\newcommand*\wide@bar@[3]{%
    \begingroup
    \def\mathaccent##1##2{%
        \let\mathaccent\save@mathaccent
        \if#32 \let\macc@nucleus\first@char \fi
        \setbox\z@\hbox{$\macc@style{\macc@nucleus}_{}$}%
        \setbox\tw@\hbox{$\macc@style{\macc@nucleus}{}_{}$}%
        \dimen@\wd\tw@
        \advance\dimen@-\wd\z@
        \divide\dimen@ 3
        \@tempdima\wd\tw@
        \advance\@tempdima-\scriptspace
        \divide\@tempdima 10
        \advance\dimen@-\@tempdima
        \ifdim\dimen@>\z@ \dimen@0pt\fi
        \rel@kern{0.6}\kern-\dimen@
        \if#31
        \overline{\rel@kern{-0.6}\kern\dimen@\macc@nucleus\rel@kern{0.4}\kern\dimen@}%
        \advance\dimen@0.4\dimexpr\macc@kerna
        \let\final@kern#2%
        \ifdim\dimen@<\z@ \let\final@kern1\fi
        \if\final@kern1 \kern-\dimen@\fi
        \else
        \overline{\rel@kern{-0.6}\kern\dimen@#1}%
        \fi
    }%
    \macc@depth\@ne
    \let\math@bgroup\@empty \let\math@egroup\macc@set@skewchar
    \mathsurround\z@ \frozen@everymath{\mathgroup\macc@group\relax}%
    \macc@set@skewchar\relax
    \let\mathaccentV\macc@nested@a
    \if#31
    \macc@nested@a\relax111{#1}%
    \else
    \def\gobble@till@marker##1\endmarker{}%
    \futurelet\first@char\gobble@till@marker#1\endmarker
    \ifcat\noexpand\first@char A\else
    \def\first@char{}%
    \fi
    \macc@nested@a\relax111{\first@char}%
    \fi
    \endgroup
    }
\makeatother

%% file: title.tex
Discrete Flow Matching Policy Optimization 

%% file: 0abstract.tex
We introduce \underline{D}iscrete fl\underline{o}w \underline{M}atch\underline{in}g policy \underline{O}ptimization (DoMinO), a unified framework for Reinforcement Learning (RL) fine-tuning Discrete Flow Matching (DFM) models under a broad class of policy gradient methods. 
Our key idea is to view the DFM sampling procedure as a multi-step Markov Decision Process.
This perspective provides a simple and transparent reformulation of fine-tuning reward maximization as a robust RL objective.
Consequently, it not only preserves the original DFM samplers but also avoids biased auxiliary estimators and likelihood surrogates used by many prior RL fine-tuning methods. 
To prevent policy collapse, we also introduce new total-variation regularizers to keep the fine-tuned distribution close to the pretrained one. 
Theoretically, we establish an upper bound on the discretization error of DoMinO and tractable upper bounds for the regularizers.
Experimentally, we evaluate DoMinO on regulatory DNA sequence design. 
DoMinO achieves stronger predicted enhancer activity and better sequence naturalness than the previous best reward-driven baselines.
The regularization further improves alignment with the natural sequence distribution while preserving strong functional performance.
These results establish DoMinO as an useful framework for controllable discrete sequence generation.

%% file: 1intro.tex
We introduce \underline{D}iscrete fl\underline{o}w \underline{M}atch\underline{in}g policy \underline{O}ptimization (DoMinO), a unified Reinforcement Learning (RL) fine-tuning framework for Discrete Flow Matching (DFM) generative models.
Methodologically, this DFM fine-tuning framework supports many popular policy-gradient methods, including REINFORCE \cite{williams1992simple}, PPO \cite{schulman2017proximal}, and GRPO \cite{shao2024deepseekmath}.
Theoretically,  this DFM fine-tuning framework possess nice guarantees for the discretization error and the total-variation distance regularizations.
Experimentally, we show that DoMinO achieves state-of-the-art performance on regulatory DNA sequence design.

Discrete Flow Matching \cite{campbell2024generative,gat2024discrete,shaul2024flow} is an effective framework for discrete generative modeling, with strong results in speech recognition \cite{navon2025drax}, graph generation \cite{qin2024defog}, video generation \cite{fuest2025maskflow,deng2025uniform}, and biological sequence modeling \cite{yi2025all,gat2024discrete}.
Compared with discrete diffusion models \cite{campbell2022continuous,sun2022score}, discrete flow matching directly parameterizes the transition rates (velocity) of a Continuous-Time Markov Chain (CTMC). 
This allows a more flexible design space for path parameterization and sampling strategies \cite{lipman2024flow}.
However, despite the success of these DFM pretrained models, their reward-driven fine-tuning remains underexplored, even though recent work shows its value for pretrained discrete generative models \cite{zekri2025fine,wang2024fine,zhao2025d1}.

To establish effective reward-driven fine-tuning methods for DFM models,
a natural objective is to maximize the expected reward of terminal samples.
However, this objective faces three challenges in the discrete flow matching setting.
First, DFM parameterizes the transition rates of an underlying CTMC rather than the policy itself, so the exact policy likelihood is not tractable.
Second, many reward functions are non-differentiable, which prevents direct optimization through the reward.
Third, reward optimization may cause over-optimization and push the model away from the pretrained distribution.
In many applications, such as DNA sequence design, we want the generated samples to remain natural, which means staying close to the pretrained model.

To address the first two challenges, we adopt policy gradient methods, which do not require differentiable rewards.
Our key idea is to reinterpret the DFM sampling procedure as an inner multi-step Markov Decision Process (MDP).
This view turns terminal reward maximization into a standard RL objective.
Crucially, the one-step policy coincides with the jump distribution, whose log-likelihood remains tractable.
This structure enables direct application of stable and efficient policy-gradient methods such as REINFORCE \cite{williams1992simple}, PPO \cite{schulman2017proximal}, and GRPO \cite{shao2024deepseekmath}.
To address the third challenge, we further introduce two total-variation distance regularizers.
These regularizers are better aligned with sample-level naturalness than path-wise KL regularization \cite{wang2024fine,zekri2025fine,rojas2025improving}.

\paragraph{Contributions.}
Our contributions are three-fold:
\begin{itemize}
   \item  \textbf{Method:}
    We propose \underline{D}iscrete fl\underline{o}w \underline{M}atch\underline{in}g policy \underline{O}ptimization  (DoMinO), a reinforcement learning fine-tuning framework for discrete flow matching models (\cref{sec:method}).
    We reframe DFM inference as an inner MDP whose policy is exactly the DFM one-step transition kernel (\cref{sec:inner_mdp}). 
    It makes the exact log-likelihood tractable and enables direct policy gradient optimization.
    Under this framework, \cref{sec:pg_dfm} derives DoMinO-REINFORCE (\cref{alg:DoMinO_reinforce}) and DoMinO-PPO (\cref{alg:DoMinO_ppo}).
    DoMinO supports non-differentiable rewards without additional approximation, steering, or guidance mechanisms, and extends naturally to conditional generation.
    To control over-optimization and preserve naturalness, we further introduce a cross-entropy regularizer and a generalized KL regularizer that keep the fine-tuned model close to the pretrained reference model (\cref{sec:tv_regularization}).
    
    \item \textbf{Theory:}
    We provide theoretical justifications for DoMinO and the proposed Total Variation distance regularizations (\cref{sec:theory}).
    Specifically, we analyze the discretization error of reward fine-tuning under the Euler sampler (\cref{thm:error_bound_euler_step_maintex}) and show that both the expected reward and its gradient incur only $O(\Delta t)$ numerical error.
    We also derive upper bounds on the terminal Total Variation distance: a Cross-Entropy bound (\cref{thm:regularization_ce_maintex}) and a generalized KL bound (\cref{thm:regularization_general_kl_maintex}).
    These results justify our regularization designs by showing that the proposed regularizers control distributional drift from the reference model.

    \item \textbf{Experiment:}
    We validate DoMinO on the regulatory DNA sequence design task (\cref{sec:exp}).
    DoMinO achieves stronger predicted enhancer activity and better sequence naturalness than the previous state-of-the-art baselines, e.g., DRAKES \cite{wang2024fine} and SEPO \cite{zekri2025fine}.
    With proposed regularization, it further improves alignment with the natural sequence distribution while preserving strong functional performance.
   
\end{itemize}
These results demonstrate that policy-gradient fine-tuning of DFM provides an effective framework for controllable discrete sequence generation.
It offers a better trade-off between functional optimization and sequence naturalness than diffusion-based or prior reward-driven methods.

\section{Related Works}
\textbf{Discrete Flow Matching.}
Recent progress in discrete generative modeling moves beyond autoregressive models toward continuous-time formulations on discrete state spaces, most notably discrete diffusion and CTMC-based denoising frameworks \cite{hoogeboom2021argmax, sun2022score, campbell2022continuous,austin2023structureddenoisingdiffusionmodels}.
Within this line, Discrete flow matching (DFM) provides a flow-based view of discrete generation, generalizing diffusion-style constructions while allowing  greater flexibility in the choice of probability paths and transition dynamics \cite{campbell2024generative,gat2024discrete}.
Subsequently, DFM develop into a useful framework for structured discrete domains \cite{gat2024discrete,shaul2024flow}.
Applications of DFM now cover several structured discrete generation problems.
For instance, its applications expand to visual token generation, with MaskFlow enabling efficient long-horizon video synthesis through discrete flow-based modeling \cite{fuest2025maskflow}.
Further, for biomolecular design Generative Flows on Discrete State-Spaces studies multimodal protein co-design  \cite{campbell2024generative}, while ADFLIP extends DFM to all-atom inverse protein folding \cite{yi2025all}.
However, the existing DFM research still centers on pretraining and generative modeling, whereas reward-driven post-training is explored primarily for discrete diffusion models rather than DFM itself \cite{wang2024fine,zekri2025fine}.
Our work addresses this gap by studying policy optimization for DFM directly, rather than adapting techniques developed for diffusion-based discrete generators.

\textbf{RL for Discrete Generative Models.}
Prior work on reinforcement learning for discrete generative models focus mainly on discrete diffusion models.
DRAKES \cite{wang2024fine} applies reinforcement learning to discrete diffusion through Direct Reward backpropagation with the Gumbel-Softmax trick. This design restricts the method to continuous reward signals.
Score Entropy Policy Optimization (SEPO) \cite{zekri2025fine} studies policy gradient fine-tuning for discrete diffusion models. However, it relies on self-normalized importance sampling (SNIS) for additional estimation.
\cite{zhao2025d1} develops a reinforcement learning algorithm for discrete diffusion, but it uses an approximation that does not yield an unbiased estimator.
\cite{nower2026flow} studies reinforcement learning for discrete flow matching through a reward-reweighted conditional flow matching loss.
In contrast, our work develops stable and efficient policy gradient methods for reinforcement learning in discrete flow matching. Our method is unbiased, does not require additional estimation, and does not rely on continuous rewards.

%% file: 2background.tex
In this section, we provide an high level review of discrete flow matching following \cite{lipman2024flow,su2025theoretical}, and the reinforcement learning.

\paragraph{Continuous-Time Markov Chain.}
Consider discrete data 
$x$ taking values in the state space 
$S= \mathcal{V}^d$ where vocabulary $ \mathcal{V} = \{1, \ldots, M\}$.
The Continuous-Time Markov Chain (CTMC)  \cite{norris1998markov} is a continuous stochastic process $(X_t)_{t \geq 0}$ on $S$ that satisfies the Markov property, which the system's future state depends only on the current state, not on the past history.
Let $p_t$ denote the Probability 
Mass 
Function (PMF) of $X_t$.
Then we define an unique Continuous-Time Markov Chain by specifying an initial distribution $p_0$
and rates function (velocity field) $u_t(y,x): S \times S \to \R $. 
This rates function induces the probability transition kernel 
$p_{t+\Delta t|t}$,
\begin{align}
\label{eqn:transition_kernel}
    p_{t+\Delta t|t}(y|x) := P(X_{t+\Delta t}=y|X_t=x)
    =  \delta(y,x) + u_t(y,x) \Delta t + o(\Delta t),
\end{align}
where $\delta(y,x)$ is the Kronecker delta function, equal to $1$ when $x=y$ and $0$ otherwise.
The values $u_t(y,x)$ represent the instantaneous rate of transition from state $x$ to state $y$ at time $t$.
We define $u_t$ \textit{generates} $p_t$ if there exists transition kernels $p_{t+\Delta t|t}$ satisfying \eqref{eqn:transition_kernel} whose induced marginal PMFs are $(p_t)_{t \geq 0}$.
For $p_{t+\Delta t|t}(\cdot |x)$ to be a valid probability mass function, i.e., $\sum_y p_{t+\Delta t|t}(y|x) = 1$, the rates function $u_t(y,x)$ must satisfy the following  rates conditions: $\text{for all}~~ y \neq x$,
\begin{align}
\label{eqn:rates_condition}
    u_t(y,x) \geq 0, \quad \text{and} \quad \sum_{y \in S} u_t(y,x)= 0.
\end{align}
By the definition of transition kernel \eqref{eqn:transition_kernel}, a rates function $u_t$ and an initial distribution $p_0$ define a unique probability path $p_t$ via the Kolmogorov Equation \cite[Theorem 12]{lipman2024flow}, 
\begin{align}
\label{eq:kolmogorov}
    \dv{p_t(y)}{t} = \sum_{x \in S} u_t(y,x) p_t(x).
\end{align}
To sample $X_T$,  we sample $X_0 \sim p_0$ and simulate sample trajectory with (naive) Euler method
\begin{align}
\label{eq:euler_method}
    P(X_{t+\Delta t}=y|X_t=x)
    = \delta(y,x) + u_t(y,x) \Delta t, \quad
    \text{with} \quad P(X_0)=  p_0(x).
\end{align}

\paragraph{Discrete Flow Matching.}
Discrete Flow Matching (DFM) is a generative modeling framework that learns a transformation from a source distribution $p_0$ to a target distribution $p_T$ \cite{campbell2024generative,gat2024discrete}. 
The key idea is to construct a velocity $u_t$ that induce a probability path $(p_t)_{t \in [0, T]}$  interpolates between $p_0$ and $p_T$. 
The learning objective is to train a neural network $u_t^\theta$ to approximate this ground-truth velocity $u_t$.
We train the model by minimizing the discrete flow matching loss with a Bregman divergence $D(\cdot,\cdot)$ (see \cref{sec:intro} for definition)
\begin{align*}
    \mathcal{L}_{\text{DFM}} = \mathbb{E}_{t, X_t \sim p_t} \left[ D(u_t(\cdot, X_t), u_t^\theta(\cdot, X_t)) \right],
\end{align*}
where the ground-truth velocity $u_t(\cdot, X_t)$  satisfies the rate conditions in \eqref{eqn:rates_condition}.
However, the ground-true velocity $u_t(\cdot,\cdot)$ is intractable.
Conditional Discrete Flow Matching (CDFM) \cite{campbell2024generative, gat2024discrete} provides a tractable loss, 
\begin{align*}
    \mathcal{L}_{\text{CDFM}} = \mathbb{E}_{t, Z \sim p_Z, X_t \sim p_{t|Z}} \left[ D(u_t(\cdot,X_t|Z), u_t^\theta(\cdot,X_t)) \right].
\end{align*}
Crucially, the CDFM and DFM objectives yield identical learning gradients \cite[Theorem 15]{lipman2024flow}, that is,
$\nabla_{\theta} \mathcal{L}_{\text{CDFM}}(\theta) = \nabla_{\theta} \mathcal{L}_{\text{DFM}}(\theta)$.
We use the standard mixture path and parameterize the DFM model through the token-wise posterior distributions $p_{1|t}^\theta(\cdot|x)$. In training, we use the generalized KL divergence as the Bregman divergence $D(\cdot,\cdot)$ following \cite{lipman2024flow}.

\textbf{Reinforcement Learning.}
We consider a Markov 
Decision 
Process (MDP) with a tuple $(\mathcal{S}, \mathcal{A}, P, R, \gamma)$.
Here, $\mathcal{S}$ and $\mathcal{A}$ denote the state and action spaces, 
$P(s' | s,a)$ is the transition kernel,
$R(s,a)$ is the reward function,
and $\gamma \in [0,1)$ is the discount factor.
A policy $\pi(a | s)$ maps each state to a distribution over actions.
Given policy $\pi$, it induces a distribution over trajectories
$\tau = (s_0,a_0,s_1,a_1,\ldots)$, where $a_t \sim \pi(\cdot | s_t)$
and $s_{t+1} \sim P(\cdot | s_t,a_t)$.
Given a trajectory $\tau$, we define the discounted return as
$G(\tau) = \sum_{t \ge 0} \gamma^t R(s_t,a_t)$.
The objective of RL is to learn parameters $\theta$ that maximize the expected return under the induced trajectory distribution,
\begin{align}
\label{eq:rl_objective_prelim}
    J_{\text{RL}}(\theta)
    =
    \E_{\tau \sim \pi_\theta}
    \Big[
    \sum_{t \ge 0} \gamma^t R(s_t,a_t)
    \Big].
\end{align}
Reinforcement learning algorithms often optimize \eqref{eq:rl_objective_prelim} with policy gradient methods \cite{williams1992simple,schulman2015trust,schulman2017proximal,shao2024deepseekmath}.
These methods estimate the gradient of $J_{\text{RL}}(\theta)$ and optimize the policy parameters $\theta$ with gradient ascent.
Compared with value-based methods, policy gradient methods directly optimize the policy parameters.
Therefore, they avoid an explicit maximization over actions and makes them 
suitable for continuous action spaces.

%% file: 2method.tex
In this section, we propose \underline{D}iscrete fl\underline{o}w \underline{M}atch\underline{in}g policy \underline{O}ptimization (DoMinO), a RL fine-tuning framework for DFM.
Specifically, \cref{sec:prob_stat} formulates the reward fine-tuning problem, \cref{sec:inner_mdp} reinterprets DFM inference as an inner multi-step Markov Decision Process (MDP) and reformulates reward maximization as a standard RL objective, and \cref{sec:pg_dfm} instantiates DoMinO with REINFORCE \cite{williams1992simple} and PPO \cite{schulman2017proximal}.

\subsection{Problem Statement}
\label{sec:prob_stat}
Assume there is a pre-existing discrete flow matching model $u_t^\theta(y,x)$, either pretrained or randomly initialized.
Let $p_T^\theta$ denote the terminal distribution induced by $u_t^\theta(y,x)$.
We study the problem of fine-tuning this discrete flow matching model to maximize the expected reward
\begin{align}
\label{eq:RL_loss}
    J(\theta)
    =
    \E_{X_T \sim p_T^\theta}[r(X_T)].
\end{align}
The objective in \eqref{eq:RL_loss} is simple.
However, discrete flow matching does not parameterize the  explicit marginal distribution $p_T^\theta$.
Instead, it parameterizes the transition velocity $u_t^\theta(y,x)$.
As a result, directly optimizing \eqref{eq:RL_loss} is difficult.
It requires either differentiating through the sampling process or estimating likelihood ratios.
Both approaches are nontrivial and often suffer from high variance or expensive marginal estimation.
To address this difficulty, we reformulate \eqref{eq:RL_loss} as a standard reinforcement learning objective defined on an inner MDP in the next subsection.

\subsection{Reframe Inference as an Inner MDP}
\label{sec:inner_mdp}

We first reframe the inference process of discrete flow matching as a multi-step inner MDP.
We then show that the standard reinforcement learning objective on this inner MDP is equivalent to the original objective $J(\theta)$.
Our method draws inspiration from denoising diffusion policy optimization \cite{black2023training}, which casts the denoising process of diffusion models as an inner MDP.
We extend this idea to the discrete setting with discrete flow matching.

Given the transition velocity $u_t^\theta(y,x)$, the terminal reward function $r$, and the time step $\Delta t$, we define the corresponding inner MDP
$\mathcal{M}=(\mathcal{S},\mathcal{A},P,R,1)$ as
\begin{align}
\label{eq:inner_mdp}
    s_t = & ~(t,x_t), \quad \pi_t^\theta(a_t | s_t) = p_t^\theta(x_{t+\Delta t} | x_t), \quad
    P(s_{t+\Delta t} | s_t,a_t) = (\delta_{t+\Delta t}, \delta_{x_{t+\Delta t}})
    \nonumber\\
    a_t = & ~x_{t+\Delta t},
    \quad P_0 = (\delta_0, p_0), \quad
    R(s_t,a_t) =
    \begin{cases}
        r(x_T), & t = T-\Delta t;\\
        0, & \text{otherwise},
    \end{cases}
\end{align}
where $p_0$ is the source distribution of the continuous-time Markov chain, and $\delta_z$ denotes the Dirac delta distribution concentrated at $z$.
The policy $p_t^\theta(x_{t+\Delta t} | x_t)$ takes the form
\begin{align}
\label{eq:one_step_prob}
    p_t^\theta(x_{t+\Delta t} | x_t)
    =
    \begin{cases}
        u_t^\theta(x_{t+\Delta t},x_t)\Delta t, & x_{t+\Delta t} \neq x_t;\\
        1 - \sum_{y \neq x_t} u_t^\theta(y,x_t)\Delta t, & x_{t+\Delta t} = x_t.
    \end{cases}
\end{align}
We call $\mathcal{M}$ an \emph{inner} MDP because it describes a single inference process of the discrete flow matching model.
This differs from a standard environment MDP, where the agent interacts with an external environment that has its own transition dynamics, as in robotics.
By the definition of the inner MDP $\mathcal{M}$ and the reinforcement learning objective $J_{\mathrm{RL}}(\theta)$, we have following proposition.
\begin{proposition}
\label{prop:equivalence}
    Let the fine-tuning objective $J(\theta)$ be defined in \eqref{eq:RL_loss}.
    Let the reinforcement learning objective $J_{\mathrm{RL}}(\theta)$ be defined in \eqref{eq:rl_objective_prelim} on the inner MDP $\mathcal{M}$.
    Then,
    \begin{align*}
        J_{\mathrm{RL}}(\theta) = J(\theta).
    \end{align*}
\end{proposition}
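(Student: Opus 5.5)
We prove the identity by writing both sides as sums over discrete trajectories and observing that they coincide term by term. Throughout, the discrete-time grid is $t_k = k\Delta t$ for $k=0,\dots,K$ with $K\Delta t = T$, and $p_T^\theta$ in \eqref{eq:RL_loss} is the terminal law produced by the Euler sampler \eqref{eq:euler_method} with velocity $u_t^\theta$. This is the distribution of generated samples, whose one-step kernel is exactly \eqref{eq:one_step_prob}.

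\textbf{Step 1: the trajectory law of $\mathcal{M}$.} By \eqref{eq:inner_mdp}, the initial state is $s_0=(0,x_0)$ with $x_0\sim p_0$. Each action is $a_{t_k}=x_{t_{k+1}}\sim p_{t_k}^\theta(\cdot\mid x_{t_k})$. The transition is deterministic: $s_{t_{k+1}}=(t_{k+1},a_{t_k})$. A trajectory $\tau$ is therefore determined by the path $(x_{t_0},\dots,x_{t_K})$, and its probability is
\begin{align*}
    \pi_\theta(\tau)=p_0(x_{t_0})\prod_{k=0}^{K-1} p_{t_k}^\theta(x_{t_{k+1}}\mid x_{t_k}).
\end{align*}
This is a valid distribution under the standing assumption that $\Delta t$ is small enough for \eqref{eq:one_step_prob} to be a PMF. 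That assumption follows from the rate conditions \eqref{eqn:rates_condition} together with $\Delta t\sum_{y\neq x}u_t^\theta(y,x)\le 1$.

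\textbf{Step 2: evaluate the return.} The discount factor is $\gamma=1$, and the reward is zero except at the final step $t_{K-1}=T-\Delta t$, where it equals $r(x_T)$. Hence $G(\tau)=r(x_{t_K})$. The episode terminates at $t_K=T$, so the sum in \eqref{eq:rl_objective_prelim} is finite and no convergence issue arises despite $\gamma=1$.

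\textbf{Step 3: marginalize.} Combining the first two steps,
\begin{align*}
    J_{\mathrm{RL}}(\theta)=\sum_{x_{t_0},\dots,x_{t_K}} p_0(x_{t_0})\prod_{k=0}^{K-1}p_{t_k}^\theta(x_{t_{k+1}}\mid x_{t_k})\, r(x_{t_K}).
\end{align*}
Summing out $x_{t_0},\dots,x_{t_{K-1}}$ by Chapman--Kolmogorov gives $\sum_{x}p_T^\theta(x)r(x)=J(\theta)$. The state space $S$ is finite, so the interchange of sums is trivial.

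The only point requiring care is making explicit that $p_T^\theta$ denotes the law of the Euler-discretized sampler rather than the exact CTMC marginal. Under the latter reading, the equality would hold only up to $O(\Delta t)$, which is the discrepancy the paper addresses separately in its discretization-error bound.
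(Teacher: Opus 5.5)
Your proof is correct and follows the paper's route. The paper asserts the identity directly from the definitions of $\mathcal{M}$ and $J_{\mathrm{RL}}$ without writing anything out, and your Steps 1--3 make that explicit: the trajectory law, the collapse of the return to $r(x_T)$ when $\gamma=1$, and the marginalization to $p_T^\theta$. Your caveat that $p_T^\theta$ must be the Euler-sampler terminal law also matches the paper's own convention in \cref{thm:error_bound_euler_step_maintex}, where $J(\theta)$ is the Euler objective and $\tilde J(\theta)$ is the exact continuous-time one.
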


\cref{prop:equivalence} reformulates the original objective $J(\theta)$ as a standard reinforcement learning objective $J_{\mathrm{RL}}$ on the inner MDP $\mathcal{M}$. A key benefit of this equivalence is that the policy $\pi_t^\theta(a_t | s_t)$ in the inner MDP is exactly the one-step transition probability $p_t^\theta(x_{t+\Delta t} | x_t)$, which is tractable with \eqref{eq:one_step_prob}. This makes $J_{\mathrm{RL}}$ natural to optimize with stable and efficient policy gradient methods.

\textbf{Conditional Generation.}
Our framework extends directly to conditional generation. Assume each sample is associated with a condition $c$, and let $u_t^\theta(y,x |c)$ denote a pretrained conditional discrete flow matching model. We then condition all quantities in the inner MDP on $c$. In particular, the DFM transition kernel becomes $p_t^\theta(x_{t+\Delta t} |x_t,c)$, the induced policy becomes $\pi_t^\theta(a_t |s_t,c)$, the source distribution becomes $p_0(\cdot |c)$, and the terminal reward becomes $r(x_T,c)$. Since the condition remains fixed throughout the sampling trajectory, the same inner-MDP construction and the same policy gradient methods apply without modification. In particular, this setting is well suited to GRPO \cite{shao2024deepseekmath}, which is widely used for RL in conditional generation.

\subsection{Discrete Flow Matching Policy Optimization}
\label{sec:pg_dfm}

In this section, we optimize the reinforcement learning objective $J_{\mathrm{RL}}(\theta)$ with policy gradient methods. 
We refer to this class of algorithms as \underline{D}iscrete fl\underline{o}w \underline{M}atch\underline{in}g policy \underline{O}ptimization (DoMinO). Below, we present two instantiations based on different gradient estimators.

\textbf{Discrete Flow Matching Policy Optimization with REINFORCE.}
We optimize $J_{\mathrm{RL}}(\theta)$ with the REINFORCE algorithm \cite{williams1992simple}.
REINFORCE uses a  log-likelihood  gradient estimator.
Concretely, REINFORCE estimates the policy gradient $\nabla_\theta J_{\mathrm{RL}}(\theta)$ by
\begin{align}
\label{eq:reinforce_grad}
    \nabla_\theta J_{\mathrm{RL}}(\theta)
    =
    \E_{\tau\sim \pi_\theta}
    \left[
        \sum_{t=0}^{T-\Delta t}
        \nabla_\theta \log \pi^\theta_t(a_t| s_t)\cdot r(x_T)
    \right].
\end{align}
Following the definition of inner MDP $\mathcal{M}=(\mathcal{S},\mathcal{A},P,R,1)$ in \eqref{eq:inner_mdp}, the policy at time $t$ equals the one-step transition of the discretized CTMC,
$\pi_t^\theta(\cdot|s_t)=p_t^\theta(\cdot|x_t)$.
Therefore, $\log \pi^\theta_t(a_t| s_t)=\log p_t^\theta(x_{t+\Delta t}|x_t)$ is tractable through \eqref{eq:one_step_prob}.
Here, $\tau$ denotes the discrete flow matching inference trajectory $\tau=
\{x_t\}_{t=0}^T$ generated by the inference process \eqref{eq:euler_method} with the rate model $u_t^\theta$, and $r(x_T)$ is the terminal reward.
In practice, we replace raw terminal reward $r(x_T)$ with an advantage $\hat A(x_T)$ to reduce variance.
We summarize the training procedure in \cref{alg:DoMinO_reinforce}.

\begin{algorithm}[t]
\caption{DoMinO-REINFORCE}
\label{alg:DoMinO_reinforce}
\begin{algorithmic}[1]
\Require Pre-trained rate model $u_\phi$; step size $\Delta t$; horizon $T$;
batch size $M$; learning rate $\eta$; iterations $K$;
terminal reward function $r(x)$; Advantage estimator $\hat A$
\State \textbf{Initialize:} $\theta \leftarrow \phi$
\For{training iteration $k=1,2,\ldots,K$}
    \For{$m=1,2,\ldots,M$}
    \State Sample trajectory
    $\tau^{(m)}=
    \{x_t^{(m)}\}_{t=0}^T$ by discrete flow matching inference \eqref{eq:euler_method} with $u_t^\theta$
    \State Compute rewards for the trajectory
    $R^{(m)} \leftarrow r(x_T^{(m)})$
    \EndFor
    \State Estimate advantages 
    $\hat A^{(m)} \leftarrow \hat 
    A\left(\{\tau^{(m)},
    R^{(m)}\}
    _{m=1}^M\right)$ 
    for $m \in [M]$
    \State Policy gradient update:
    \begin{align*}
        \theta \leftarrow \theta + \eta
        \frac{1}{M}\sum_{m=1}^M
        \sum_{t=0}^{T-\Delta t}
        \nabla_\theta \log p_t^\theta(x_{t+\Delta t}^{(m)}|x_t^{(m)})
        \hat A^{(m)}
    \end{align*}
\EndFor
\State \textbf{Output:} fine-tuned policy $\pi_\theta$ with rate model $u_t^\theta$.
\end{algorithmic}
\end{algorithm}

\textbf{Discrete Flow Matching Policy Optimization with PPO.}
We optimize $J_{\text{RL}}(\theta)$ with Proximal Policy Optimization (PPO) \cite{schulman2017proximal}.
PPO controls the update size by clipping the likelihood ratio between the new policy and the old policy, which improves stability under high-variance rewards and non-differentiable terminal objectives.
We sample trajectories from the old policy $\pi_{\text{old}}$ and update $\theta$ by maximizing the clipped  surrogate objective
\begin{align*}
    J_{\text{PPO}}(\theta)
    =
    \E_{\tau \sim \pi_{\text{old}}}
    \left[
        \sum_{t=0}^{T-\Delta t}
        \min\left\{
            r_t(\theta)A_t,
            \text{Clip}\left(r_t(\theta),1-\epsilon,1+\epsilon\right)
            A_t
        \right\}
    \right],
\end{align*}
where $\tau=\{x_t\}_{t=0}^T$ denotes a discrete flow matching inference trajectory, $a_t := x_{t+\Delta t}$,
$r_t(\theta)=\frac{\pi_\theta(a_t|s_t)}{\pi_{\text{old}}(a_t|s_t)}$ is the likelihood ratio, and $\epsilon$ is the clip parameter.
Following the same inner MDP definition \eqref{eq:inner_mdp}, we again have $\pi_t^\theta(\cdot|s_t)=p_t^\theta(\cdot|x_t)$, so the likelihood ratio $r_t(\theta)$ is tractable with \eqref{eq:one_step_prob}.
In the terminal-reward setting, we often use a trajectory-level advantage and share it across time steps, i.e., $\hat A_t := \hat A$.
We summarize the training procedure in \cref{alg:DoMinO_ppo}.

\begin{algorithm}[t]
\caption{DoMinO-PPO}
\label{alg:DoMinO_ppo}
\begin{algorithmic}[1]
\Require Pre-trained rate model $u_\phi$; step size $\Delta t$; horizon $T$;
batch size $M$; learning rate $\eta$; iterations $K$;
clip parameter $\epsilon$; terminal reward function $r(x)$; advantage estimator $\hat A$
\State \textbf{Initialize:} $\theta \leftarrow \phi$
\For{training iteration $k=1,2,\ldots,K$}
    \State Set $\theta_{\text{old}} \leftarrow \theta$
    \For{$m=1,2,\ldots,M$}
        \State Sample trajectory $\{x_t^{(m)}\}_{t=0}^T$ by discrete flow matching inference \eqref{eq:euler_method} with $u_t^{\theta_{\text{old}}}$.
        \State Compute terminal reward $R^{(m)} \leftarrow r(x_T^{(m)})$
    \EndFor
    \State Estimate advantages $\{\hat A_t^{(m)}\} \leftarrow \hat A(\{\tau^{(m)},R^{(m)}\}_{m=1}^M)$
    \State Compute the ratios
    $r_t^{(m)}(\theta) ={p_t^\theta(x_{t+\Delta t}^{(m)}|x_t^{(m)})}/{p_t^{\theta_{\text{old}}}(x_{t+\Delta t}^{(m)}|x_t^{(m)})}$
    \State PPO update:
    \begin{align*}
        \theta \leftarrow \theta + \eta \nabla_\theta
        \frac{1}{M}\sum_{m=1}^M
        \sum_{t=0}^{T-\Delta t}
        \min\left\{
            r_t^{(m)}(\theta)\hat A_t^{(m)},
            \text{Clip}\left(r_t^{(m)}(\theta), 
            1-\epsilon, 1+\epsilon\right)\hat A_t^{(m)}
        \right\}
    \end{align*}
\EndFor
\State \textbf{Output:} fine-tuned policy $\pi_\theta$ with rate model $u_t^\theta$.
\end{algorithmic}
\end{algorithm}

\section{Total Variation Distance Regularization}
\label{sec:tv_regularization}

We introduce Total Variation (TV) distance regularization to prevent over-optimization in reward-driven fine-tuning and preserve sequence naturalness. Optimizing only the expected terminal reward may push the model toward unrealistic samples that exploit imperfections of the reward function and drift away from the pretrained distribution. 
To avoid this failure mode, a common choice in prior work is path-wise Kullback-Leibler (KL) regularization
\cite{wang2024fine,zekri2025fine,rojas2025improving}.
Let $\mathbb{P}_\theta$ and $\mathbb{P}_{\mathrm{ref}}$ denote the path measures induced by the fine-tuned model and the reference model, respectively, and assume they share the same initial distribution $p_0$. For continuous-time Markov chains, the path-wise KL divergence admits 
\begin{align*}
\mathrm{KL}\left(\mathbb P_\theta \,\|\, \mathbb P_{\mathrm{ref}}\right)
=
\mathbb E_{\mathbb P_\theta}\left[
\int_0^T
\sum_{y \neq X_t}
\left(
u_t^\theta(y,X_t)
\log \frac{u_t^\theta(y,X_t)}{u_t^{\mathrm{ref}}(y,X_t)}
- u_t^\theta(y,X_t)
+ u_t^{\mathrm{ref}}(y,X_t)
\right)
\dd t
\right].
\end{align*}
However, this path-wise regularization has several drawbacks. First, it requires integrating rate-level terms along the full sampling trajectory and over all possible next states, which increases computational cost when the trajectory is long or the per-step state space is large. 
Second, it regularizes the entire trajectory rather than the terminal distribution, which may impose unnecessary constraints when naturalness is defined at the sample level.

We control the shift of the terminal distribution through TV distance. 
It avoids the unnecessary  constraint imposed by path-wise KL regularization.
Let $p_T^\theta$ denote the terminal distribution induced by the fine-tuned model $u_t^\theta$, and let $p_T^{\mathrm{ref}}$ denote a reference terminal distribution induced by reference model $u_t^{\text{ref}}$
(e.g. the pretrained model). 
We define the TV distance on the space $S$ as
\begin{align*}
    \mathrm{TV}(p_T^\theta,p_T^{\mathrm{ref}})
    :=
    \frac{1}{2}
    \sum_{x \in S}
    |p_T^\theta(x)-p_T^{\mathrm{ref}}(x)|.
\end{align*}
Directly optimizing or estimating this terminal TV distance remains intractable. 
Following prior work \cite{gat2024discrete,lipman2024flow}, we use the mixture path and parameterize the DFM model through the  posterior $p_{1|t}^\theta(\cdot|x)$. Under this parameterization, we consider two tractable regularizers. 
The first acts on the posterior and takes the form of a cross-entropy loss. 
The second is induced by the generalized KL divergence used in DFM pretraining (\cref{sec:background}).

\textbf{Cross-Entropy Regularization.}
Let $\theta_{\mathrm{ref}}$ denote the frozen reference parameter, and let $p_t^{\theta_{\mathrm{ref}}}$ be the marginal probability distribution induced by the reference model. 
Under the posterior parameterization, we regularize the fine-tuned posterior toward the reference posterior at states sampled from the reference trajectory. We define the cross-entropy regularizer as
\begin{align}
\mathcal{L}_{\mathrm{reg}}^{\mathrm{CE}}(\theta;\theta_{\mathrm{ref}})
:=
\mathbb{E}_{t, X_t \sim p_t^{\theta_{\mathrm{ref}}}}
\left[
-
\sum_{y \in \mathcal{S}}
p_{1|t}^{\theta_{\mathrm{ref}}}(y|X_t)
\log p_{1|t}^{\theta}(y|X_t)
\right].
\label{eq:dfm_ce_reg}
\end{align}

\textbf{Generalized KL Regularization.}
We also consider a regularizer induced by the generalized KL divergence between the reference velocity $u_t^{\theta_{\mathrm{ref}}}(\cdot, X_t)$ and fine-tuned velocities $u_t^\theta(\cdot, X_t)$. 
For nonnegative vectors $u$ and $v$, define the generalized KL divergence as
\begin{align}
D_{\mathrm{gKL}}(u,v)
:=
\sum_j u_j \log \frac{u_j}{v_j}
-
\sum_j u_j
+
\sum_j v_j.
\label{eq:gkl_def}
\end{align}
We define the generalized KL regularizer is
\begin{align}
\label{eq:dfm_gkl_reg}
\mathcal{L}_{\mathrm{reg}}^{\mathrm{gKL}}(\theta;\theta_{\mathrm{ref}})
:=
\mathbb{E}_{t, X_t \sim p_t^{\theta_{\mathrm{ref}}}}
\left[
D_{\mathrm{gKL}}
\left(
u_t^{\theta_{\mathrm{ref}}}(\cdot, X_t),
u_t^{\theta}(\cdot, X_t)
\right)
\right].
\end{align}
Both regularizers support efficient computation on the same rollouts used by the RL update. In particular, we reuse the stored rollout states $(X_t,t)$ without additional sampling. The regularizer only requires one extra forward pass through the reference model to evaluate the reference posterior or velocity at $X_t$.
In the next section, we show that both $\mathcal{L}_{\mathrm{reg}}^{\mathrm{CE}}$ and $\mathcal{L}_{\mathrm{reg}}^{\mathrm{gKL}}$ provide tractable upper bounds on the terminal TV distance. The full fine-tuning objective is
\begin{align}
J_{\mathrm{total}}(\theta)
=
J_{\mathrm{RL}}(\theta)
-
\lambda \mathcal{L}_{\mathrm{reg}}(\theta;\theta_{\mathrm{ref}}),
\label{eq:full_rl_reg_obj}
\end{align}
where $\mathcal{L}_{\mathrm{reg}}$ is either cross-entropy regularizer $\mathcal{L}_{\mathrm{reg}}^{\mathrm{CE}}$ or generalized KL regularizer $\mathcal{L}_{\mathrm{reg}}^{\mathrm{gKL}}$.

\section{Theoretical Analysis}
\label{sec:theory}
In this section, we provide theoretical justification for DoMinO from two aspects.
First, we analyze the discretization error of RL fine-tuning with the Euler sampler (\cref{thm:error_bound_euler_step_maintex}).
Second, we derive upper bounds on the terminal Total Variation distance in terms of the cross-entropy loss (\cref{thm:regularization_ce_maintex}) and the generalized KL loss (\cref{thm:regularization_general_kl_maintex}).
These results justify the proposed cross-entropy regularizer $\mathcal{L}_{\mathrm{reg}}^{\mathrm{CE}}$ and generalized KL regularizer $\mathcal{L}_{\mathrm{reg}}^{\mathrm{gKL}}$, since they show that both regularizers control distributional drift from the reference model.

\subsection{Discretization Error of RL Fine-Tuning}
\label{subsec:discretion_eroor_of_reward}

We analyze the discretization error in RL fine-tuning under the Euler sampler.
Since DoMinO defines the reward fine-tuning objective through discretized sampling trajectories, we need to understand how this objective differs from its exact continuous-time counterpart.
The following theorem shows that both the reward objective and its gradient incur only first-order error.

\begin{theorem}[Discretization Error of Euler Method]\label{thm:error_bound_euler_step_maintex}
Assume the reward function is bounded, satisfying $\sup_x |r(x)| \leq R_{\rm max}$.
Further, suppose the parameter $\theta$ is defined on a compact set $\Theta$ and velocity satisfies $u_t^\theta(y,x)\in C^2([0,T]\times\Theta)$ for all $x,y\in\mathcal{S}$.
Let $\tilde{p}_t^\theta(x)$ denote the exact distribution generated by Kolmogorov equation
\begin{align*}
    \dv{\tilde{p}_t^\theta(y)}{t} = \sum_{x \in \mathcal{S}} u_t^\theta(y,x) \tilde{p}_t^\theta(x),
\end{align*} 
and $p_t^\theta$ denote the distribution generated by Euler method \eqref{eq:one_step_prob}.
Let $\tilde{J}(\theta)$ and $J(\theta)$ denote the expected reward of $\tilde{p}_T^\theta$ and $p_T^\theta$ following \eqref{eq:RL_loss}.
Then we have:
\begin{align*}
    | J(\theta)-\tilde{J}(\theta)|=O(\Delta t),
    \|\nabla_\theta J(\theta)-\nabla_\theta\tilde{J}(\theta)\|_\infty=O(\Delta t).
\end{align*}
\end{theorem}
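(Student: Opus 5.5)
Since $\mathcal{S}$ is finite, we write the dynamics in matrix form. Let $Q_t^\theta\in\R^{|\mathcal{S}|\times|\mathcal{S}|}$ have entries $(Q_t^\theta)_{yx}=u_t^\theta(y,x)$. The exact marginal $\tilde p_t^\theta$ solves the linear ODE $\dot{\tilde p}_t = Q_t^\theta\tilde p_t$. The Euler marginal obeys $p_{t+\Delta t}^\theta=(I+\Delta t\,Q_t^\theta)p_t^\theta$ on the grid $t_k=k\Delta t$, with $N=T/\Delta t$ steps and common initial condition $p_0$.

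By compactness of $\Theta$ and the $C^2$ assumption, the following quantities are bounded uniformly over $(t,\theta)$:
\begin{align*}
\|Q\|,\ \|\partial_t Q\|,\ \|\partial_t^2 Q\|,\ \|\nabla_\theta Q\|,\ \|\partial_t\nabla_\theta Q\|.
\end{align*}
Bounds on the first three give uniform bounds on $\tilde p$, $\dot{\tilde p}$, and $\ddot{\tilde p}=\dot Q\tilde p+Q^2\tilde p$. Since $J(\theta)-\tilde J(\theta)=\sum_x r(x)\bigl(p_T^\theta(x)-\tilde p_T^\theta(x)\bigr)$, we have $|J-\tilde J|\le R_{\max}\|p_T^\theta-\tilde p_T^\theta\|_1$, and likewise $\|\nabla_\theta(J-\tilde J)\|_\infty\le R_{\max}\max_i\|\partial_{\theta_i}(p_T^\theta-\tilde p_T^\theta)\|_1$. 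Both claims therefore reduce to $O(\Delta t)$ global error bounds, for the Euler scheme and for its $\theta$-derivative.

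\textbf{Step 1 (value).} This is the standard Euler argument. The local truncation error is
\begin{align*}
\tau_k:=\tilde p_{t_{k+1}}-(I+\Delta t Q_{t_k})\tilde p_{t_k}.
\end{align*}
Taylor's theorem with the bound on $\ddot{\tilde p}$ gives $\|\tau_k\|_1\le C\Delta t^2$. For the error $e_k=p_{t_k}-\tilde p_{t_k}$ we get $e_{k+1}=(I+\Delta tQ_{t_k})e_k-\tau_k$ with $e_0=0$. Using $\|I+\Delta tQ\|_1\le 1+C\Delta t$, a discrete Grönwall argument yields $\|e_N\|_1\le e^{CT}\cdot NC\Delta t^2=O(\Delta t)$. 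In fact, for $\Delta t$ small, $I+\Delta tQ$ is column-stochastic, so its $\ell_1$ operator norm equals $1$.

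\textbf{Step 2 (gradient).} Differentiate both schemes in $\theta_i$. Write $q_t=\partial_{\theta_i}\tilde p_t$ and $g_k=\partial_{\theta_i}p_{t_k}$. Then
\begin{align*}
\dot q_t &= Q_t q_t+(\partial_{\theta_i}Q_t)\tilde p_t,\\
g_{k+1} &= (I+\Delta t Q_{t_k})g_k+\Delta t(\partial_{\theta_i}Q_{t_k})p_{t_k}.
\end{align*}
The second recursion is exactly the Euler scheme for the first ODE, except that its forcing term uses $p_{t_k}$ in place of $\tilde p_{t_k}$. Both $q_0$ and $g_0$ vanish because $p_0$ does not depend on $\theta$.

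\begin{itemize}
\item The truncation error of Euler for the $q$-ODE is $O(\Delta t^2)$ per step, provided $\ddot q$ is bounded. Since $\ddot q=\dot Q q+Q\dot q+(\partial_t\partial_{\theta_i}Q)\tilde p+(\partial_{\theta_i}Q)\dot{\tilde p}$, this needs exactly the mixed derivative $\partial_t\nabla_\theta u$, which $C^2$ supplies. It also needs a bound on $q$, which follows from Grönwall on the linear ODE.
\item The forcing mismatch contributes $\Delta t\|\partial_{\theta_i}Q\|\,\|e_k\|_1=O(\Delta t^2)$ per step, by Step 1.
\end{itemize}
The same discrete Grönwall argument then gives $\|g_N-q_T\|_1=O(\Delta t)$, uniformly in $\theta$ and in $i$.

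\textbf{Main obstacle.} Step 2 is the delicate part. We must confirm the regularity of $q$ (that $\ddot q$ is bounded), which is where the $C^2$ hypothesis in $(t,\theta)$ is genuinely used. We must also control the interaction term involving the Step 1 error $e_k$. Compactness of $\Theta$ makes every constant uniform, so the bound holds in $\|\cdot\|_\infty$ over gradient coordinates.
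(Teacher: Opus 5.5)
Your proposal is correct and follows essentially the same route as the paper. Both arguments use the matrix form of the Kolmogorov equation, $O(\Delta t^2)$ local truncation errors for the distribution and its $\theta$-derivative, and the gradient-error recursion $E_{k+1}=(I+\Delta t U_{t_k})E_k+\Delta t(\nabla_\theta U_{t_k})\epsilon_k-R_p$ that couples to the value error, closed by discrete Gr\"onwall. Your version is more explicit than the paper's in identifying that the $O(\Delta t^2)$ remainder for the gradient ODE needs the mixed derivative $\partial_t\nabla_\theta u$, which the $C^2$ hypothesis supplies.
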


\begin{proof}
See \cref{subsec:proof_of_discretization_error} for a detailed proof.
\end{proof}

\cref{thm:error_bound_euler_step_maintex} shows that optimizing the RL fine-tuning objective induced by the Euler discretization gives a first-order approximation to optimizing the exact continuous-time objective.
In particular, both the expected reward and its gradient differ from their continuous-time counterparts by at most $O(\Delta t)$.
Therefore, policy-gradient updates based on the discretized sampler remain consistent with the underlying continuous-time model when the step size is sufficiently small.

\subsection{TV Error Bounds with Cross-Entropy and Generalized KL Losses}
In this section, we derive upper bounds on the terminal Total Variation distance in terms of two tractable quantities: the cross-entropy loss $\mathcal{L}_{\rm reg}^{\rm CE}(\theta;\theta^{\rm ref})$ and the generalized KL loss $\mathcal{L}_{\rm reg}^{\rm gKL}(\theta;\theta^{\rm ref})$.

\begin{theorem}[TV-Distance Error Bounds with Cross-Entropy Loss ]\label{thm:regularization_ce_maintex}
Fix a reference parameter $\theta_{{\rm ref}}$.
Assume the DFM model is parameterized through the distributions $p_{1 | t}^{\theta}(\cdot | x)$, and suppose the corresponding velocity fields $u_t^{\theta}(y, x)$ are uniformly bounded for all $x, y \in S$, $t \in [0, T]$.
Let $p^\theta$ and $p^{\theta_{{\rm ref}}}$ represent the distribution generated by $\{u_t^{\theta}\}$ and $\{u_t^{\theta_{\rm ref}}\}$ respectively.
Then it holds
\begin{align*}
{\rm TV}\bigl(p^\theta, p^{\theta_{{\rm ref}}}\bigr)
\lesssim
\sqrt{\mathcal{L}_{{\rm reg}}^{{\rm CE}}(\theta,\theta_{\rm ref})-\mathcal{L}_{{\rm reg}}^{{\rm CE}}(\theta_{\rm ref};\theta_{\rm ref})}.
\end{align*}
\end{theorem}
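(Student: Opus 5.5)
The plan is to go in three steps: bound the terminal TV distance by an integrated rate discrepancy, rewrite that discrepancy in terms of the posteriors, and then pass from posterior differences to the cross-entropy excess via Pinsker.

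\textbf{Step 1: TV bound via the Kolmogorov equation.} Let $p_t^\theta$ and $p_t^{\mathrm{ref}}$ (shorthand for $p_t^{\theta_{\mathrm{ref}}}$) be the marginals generated by $u_t^\theta$ and $u_t^{\theta_{\mathrm{ref}}}$ from the common source $p_0$. Their difference $e_t=p_t^\theta-p_t^{\mathrm{ref}}$ satisfies
\begin{align*}
\dv{e_t}{t} = u_t^\theta e_t + (u_t^\theta-u_t^{\theta_{\mathrm{ref}}})\,p_t^{\mathrm{ref}}, \qquad e_0=0.
\end{align*}
By Duhamel's formula with the (stochastic, hence $\ell^1$-contractive) propagator of $u^\theta$, we get
\begin{align*}
\|e_T\|_1 \le \int_0^T \E_{X_t\sim p_t^{\mathrm{ref}}}\Big[\sum_y |u_t^\theta(y,X_t)-u_t^{\theta_{\mathrm{ref}}}(y,X_t)|\Big]\dd t .
\end{align*}
This is the standard data-processing argument: the TV distance is at most the integrated rate mismatch, evaluated along the reference trajectory. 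This matches the fact that the regularizer samples $X_t$ from $p_t^{\theta_{\mathrm{ref}}}$.

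\textbf{Step 2: from velocities to posteriors.} Under the mixture path with posterior parameterization, $u_t^\theta(y,x)=\kappa_t\,(p_{1|t}^\theta(y|x)-\delta(y,x))$ (token-wise, with scheduler factor $\kappa_t=\dot\kappa_t/(1-\kappa_t)$). Hence
\begin{align*}
\sum_y|u_t^\theta-u_t^{\theta_{\mathrm{ref}}}| \le C_t\,\|p_{1|t}^\theta(\cdot|x)-p_{1|t}^{\theta_{\mathrm{ref}}}(\cdot|x)\|_1 ,
\end{align*}
with a sum over the $d$ coordinates. The uniform boundedness assumption on the velocities is what allows me to take $C_t$ bounded; I absorb $\sup_t C_t$ and $d$ into $\lesssim$.

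\textbf{Step 3: Pinsker and Jensen.} Pinsker gives
\begin{align*}
\|p_{1|t}^\theta-p_{1|t}^{\mathrm{ref}}\|_1\le\sqrt{2\,\mathrm{KL}(p_{1|t}^{\mathrm{ref}}\,\|\,p_{1|t}^\theta)} ,
\end{align*}
and the KL term equals the cross-entropy minus the entropy of $p_{1|t}^{\mathrm{ref}}$. Taking expectations over $t$ and $X_t\sim p_t^{\mathrm{ref}}$, the entropy term is exactly $\mathcal{L}_{\mathrm{reg}}^{\mathrm{CE}}(\theta_{\mathrm{ref}};\theta_{\mathrm{ref}})$, so the expected KL equals $\mathcal{L}^{\mathrm{CE}}_{\mathrm{reg}}(\theta;\theta_{\mathrm{ref}})-\mathcal{L}^{\mathrm{CE}}_{\mathrm{reg}}(\theta_{\mathrm{ref}};\theta_{\mathrm{ref}})$. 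Jensen's inequality then moves the square root outside the expectation, which yields the claimed bound.

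\textbf{Main obstacle.} The hard part is Step 2's constant. The mixture-path scheduler factor $\dot\kappa_t/(1-\kappa_t)$ blows up as $t\to T$, and the bounded-velocity hypothesis must be used to control it, or alternatively one can truncate the time interval near $T$. If that fails, the fallback is to keep $C_t$ inside the integral and apply Cauchy–Schwarz, which gives a weighted version; with uniform weights over $t$ this reduces to the stated bound up to constants.
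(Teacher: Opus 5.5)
Your argument is correct, under the same extra scheduler assumption that the paper itself adds (see the caveat below). Its central step, however, differs from the paper's.

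\textbf{Comparison with the paper.} The paper never bounds TV directly. It forms the squared-$\ell_2$ velocity risk $\mathcal{R}(\theta)=\int_0^T\E_{X_t\sim p_t^{\theta_{\rm ref}}}\|u^{\theta}(X_t,t)-u^{\theta_{\rm ref}}(X_t,t)\|_2^2\,\dd t$. Under the mixture path it computes this as $\int_0^T\E\,\frac{\dot\kappa_t^2}{(1-\kappa_t)^2}\|p_{1|t}^{\theta}-p_{1|t}^{\theta_{\rm ref}}\|_2^2\,\dd t$. It then bounds this by $2M_0^2\int_0^T\E\,\mathrm{KL}\,\dd t$ using $\|\cdot\|_2\le\|\cdot\|_1$ and Pinsker. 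The square root comes from an imported result (Theorem C.1 of Su et al.), ${\rm TV}\lesssim\sqrt{\mathcal{R}(\theta)}$.

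You instead prove an $\ell_1$ estimate from scratch:
\begin{itemize}
\item Duhamel plus $\ell^1$-contractivity of the stochastic propagator gives ${\rm TV}\le\frac12\int_0^T\E\sum_y|u_t^\theta-u_t^{\theta_{\rm ref}}|\,\dd t$.
\item The mixture-path form turns the integrand into $\frac{\dot\kappa_t}{1-\kappa_t}$ times the $\ell_1$ posterior gap.
\item The square root then comes from Pinsker followed by Jensen.
\end{itemize}

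Your route has several advantages:
\begin{itemize}
\item It is self-contained and yields an explicit constant, of order $M_0\sqrt{d}\,T$.
\item It uses the velocity bound in Step 1 only for well-posedness of the chain.
\item It applies Pinsker to ${\rm KL}(p_{1|t}^{\theta_{\rm ref}}\,\|\,p_{1|t}^{\theta})$, which is exactly the cross-entropy excess. The paper's chain is written with ${\rm KL}(p_{1|t}^{\theta}\,\|\,p_{1|t}^{\theta_{\rm ref}})$ and silently switches direction at the end; this is harmless, since Pinsker holds either way.
\end{itemize}
The paper's route, in exchange, is shorter given the lemma and shares its skeleton with the generalized-KL bound.

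For your sum over coordinates, state explicitly that the cross-entropy in the regularizer splits into token-wise cross-entropies. This holds when the joint posterior is the product of the token posteriors. Then $\sum_i\sqrt{\mathrm{KL}_i}\le\sqrt{d\sum_i\mathrm{KL}_i}$ closes the argument.

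\textbf{Caveat on Step 2.} Your justification that uniformly bounded velocities let you take $\sup_t C_t<\infty$ does not hold. Since $u_t(y,x)=\frac{\dot\kappa_t}{1-\kappa_t}p_{1|t}(y|x)$ for $y\neq x$, the velocities can stay bounded while $\frac{\dot\kappa_t}{1-\kappa_t}\to\infty$, provided the off-diagonal posterior mass vanishes. Moreover, if $\kappa_T=1$ the factor is not even integrable, because $\int_0^T\frac{\dot\kappa_t}{1-\kappa_t}\,\dd t=\log\frac{1-\kappa_0}{1-\kappa_T}$. So your Cauchy--Schwarz fallback fails as well.

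What is actually needed is the explicit hypothesis $\dot\kappa_t/(1-\kappa_t)\le M_0$ on $[0,T]$, which is equivalent to your truncation option of stopping before the terminal singularity. This is exactly the assumption the paper postulates inside its proof. With that hypothesis stated, your proof goes through.
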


\begin{proof}
See \cref{subsec:tv_bound_regularization} for a detailed proof.
\end{proof}

\begin{theorem}[TV-Distance Error Bounds with Generalized KL Loss]\label{thm:regularization_general_kl_maintex}
Fix a reference parameter $\theta_{{\rm ref}}$.
Suppose the factorized velocity fields $u_t^{\theta}(y, x)$ are uniformly bounded for all $x, y \in S$, $t \in [0, T]$.
Let $p^\theta$ and $p^{\theta_{{\rm ref}}}$ represent the distribution generated by $\{u_t^{\theta}\}$ and $\{u_t^{\theta_{\rm ref}}\}$ respectively.
Then it holds
\begin{align*}
        {\rm TV}(p^{\theta},p^{\theta_{\rm ref}})\lesssim \sqrt{\mathcal{L}_{\mathrm{reg}}^{\mathrm{gKL}}(\theta;\theta_{\mathrm{ref}})}.
\end{align*}
\end{theorem}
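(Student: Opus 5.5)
We bound the terminal TV distance by the TV distance between the two path measures, and then control that by the path-wise KL divergence. Let $\mathbb P_{\theta}$ and $\mathbb P_{\theta_{\rm ref}}$ denote the path measures of the CTMCs generated by $u_t^\theta$ and $u_t^{\theta_{\rm ref}}$, both started from the same $p_0$. The terminal marginal is a pushforward of the path measure, so the data-processing inequality gives ${\rm TV}(p^\theta,p^{\theta_{\rm ref}})\le {\rm TV}(\mathbb P_{\theta_{\rm ref}},\mathbb P_\theta)$. Pinsker's inequality then gives ${\rm TV}(\mathbb P_{\theta_{\rm ref}},\mathbb P_\theta)\le\sqrt{\tfrac12 \mathrm{KL}(\mathbb P_{\theta_{\rm ref}}\|\mathbb P_\theta)}$. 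We deliberately put the reference path measure first in the KL: the expectation is then taken under the reference trajectory, which matches the sampling $X_t\sim p_t^{\theta_{\rm ref}}$ in \eqref{eq:dfm_gkl_reg}.

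Next we write this KL explicitly using the Girsanov-type formula for CTMCs, the same one displayed in \cref{sec:tv_regularization}, with the roles of $\theta$ and $\theta_{\rm ref}$ swapped:
\begin{align*}
\mathrm{KL}(\mathbb P_{\theta_{\rm ref}}\|\mathbb P_\theta)
=\mathbb E_{\mathbb P_{\theta_{\rm ref}}}\Big[\int_0^T\sum_{y\neq X_t}\Big(u_t^{\theta_{\rm ref}}(y,X_t)\log\frac{u_t^{\theta_{\rm ref}}(y,X_t)}{u_t^\theta(y,X_t)}-u_t^{\theta_{\rm ref}}(y,X_t)+u_t^\theta(y,X_t)\Big)\dd t\Big].
\end{align*}
The integrand depends on the path only through $X_t$, so by Fubini the expectation reduces to $\int_0^T \mathbb E_{X_t\sim p_t^{\theta_{\rm ref}}}[\,\cdot\,]\,\dd t$. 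The bracket is exactly $D_{\rm gKL}(u_t^{\theta_{\rm ref}}(\cdot,X_t),u_t^\theta(\cdot,X_t))$ from \eqref{eq:gkl_def}, restricted to off-diagonal entries. The diagonal entries are determined by the rate conditions \eqref{eqn:rates_condition} and are excluded, which is consistent with the meaning of the velocity vector in the regularizer. Taking $t$ uniform on $[0,T]$ gives $\mathrm{KL}(\mathbb P_{\theta_{\rm ref}}\|\mathbb P_\theta)=T\,\mathcal L^{\rm gKL}_{\rm reg}(\theta;\theta_{\rm ref})$. Combining the two steps, ${\rm TV}(p^\theta,p^{\theta_{\rm ref}})\le\sqrt{T/2}\,\sqrt{\mathcal L^{\rm gKL}_{\rm reg}}$, which is the claimed bound with the constant absorbed into $\lesssim$.

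The main obstacle is making the path-space KL formula rigorous. This requires:
\begin{itemize}
\item absolute continuity of $\mathbb P_{\theta_{\rm ref}}$ with respect to $\mathbb P_\theta$, i.e.\ $u^\theta_t(y,x)>0$ wherever $u^{\theta_{\rm ref}}_t(y,x)>0$ (otherwise both sides are $+\infty$ and the bound is trivial);
\item integrability of the log-likelihood ratio.
\end{itemize}
The uniform boundedness assumption on the velocities handles integrability: the jump count has bounded intensity, so the Radon--Nikodym density $\exp\big(\sum_{\text{jumps}}\log\frac{u^{\rm ref}}{u^\theta}-\int_0^T\sum_{y\neq X_t}(u^{\rm ref}-u^\theta)\dd t\big)$ is a genuine martingale and its expectation can be computed by the compensator identity. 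The factorized structure means only single-token jumps have nonzero rate, so all sums are finite.

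If a direct Girsanov argument is too heavy, a fallback is to work with the Euler discretization. In that setting the path KL is an exact finite sum of one-step KLs $\mathrm{KL}(p_t^{\theta_{\rm ref}}(\cdot|x)\,\|\,p_t^\theta(\cdot|x))$. Each one-step term expands as $\Delta t\, D_{\rm gKL}+O(\Delta t^2)$ uniformly, by boundedness of the rates, and the remainder vanishes as $\Delta t\to0$.
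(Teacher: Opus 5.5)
Your proof is correct, and it takes a genuinely different route from the paper. The paper never works on path space. It uses the pointwise bound $u\log(u/v)-u+v\ge (u-v)^2/(2(u+v))$ together with the uniform velocity bound $|u_t(y,x)|\le M_0$, read off from the mixture-path form. This gives $\mathcal R(\theta)\lesssim M_0\,\mathcal{L}^{\mathrm{gKL}}_{\mathrm{reg}}(\theta;\theta_{\mathrm{ref}})$, where $\mathcal R$ is the integrated squared $\ell_2$ velocity error under the reference marginals. The paper then imports \cref{lem:bound_distribution_with_l2} to turn $\sqrt{\mathcal R(\theta)}$ into a terminal TV bound. You instead identify $T\,\mathcal{L}^{\mathrm{gKL}}_{\mathrm{reg}}$ exactly with $\mathrm{KL}(\mathbb P_{\theta_{\mathrm{ref}}}\|\mathbb P_\theta)$ via the jump-process likelihood-ratio formula and the compensator identity. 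You then finish with data processing and Pinsker.

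What your route buys:
\begin{itemize}
\item an identity rather than a lossy comparison;
\item an explicit constant $\sqrt{T/2}$ that does not depend on $M_0$, the vocabulary size, or the scheduler $\kappa_t$;
\item a stronger conclusion, namely TV between the full path laws;
\item no reliance on an external lemma.
\end{itemize}
Boundedness enters only to justify the Girsanov formula. As you note, a failure of absolute continuity makes the right-hand side infinite. Your argument also shows that the gKL regularizer is itself a path-wise KL, just in the reverse direction from the one displayed in \cref{sec:tv_regularization}.

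What the paper's route buys: it avoids path-space measure theory entirely and reuses the same velocity-risk lemma as the cross-entropy bound, so both regularizer results follow one template. The price is unspecified constants depending on $M_0$ and on whatever the imported lemma hides.

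Finally, summing only over off-diagonal entries is a necessity, not a convenience, and the paper's argument relies on it too, since its pointwise inequality needs $u_j,v_j>0$. If the negative diagonal entry were included in \eqref{eq:gkl_def}, the divergence would vanish for proportional rate vectors $u_t^\theta=c\,u_t^{\theta_{\mathrm{ref}}}$ with $c\neq 1$. Such rates generate genuinely different processes, so no such TV bound could hold under that reading.
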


\begin{proof}
See \cref{subsec:tv_bound_regularization} for a detailed proof.
\end{proof}

These bounds let us control the discrepancy between the fine-tuned model and the reference model.
As a result, they prevent excessive distributional drift during reward fine-tuning.
By controlling the terminal TV distance, the model stays close to the pretrained distribution while still improving reward, which helps avoid over-optimization and preserve sample naturalness.

%% file: 3exp.tex
In this section, we evaluate our approach on regulatory DNA design and compare it with established diffusion-based baselines.

\subsection{Task: Regulatory DNA Sequence Design}
\label{sec:setting}
Recent genomic models have shown that large-scale training learns transferable representations of DNA sequence structure and function \cite{zhou2025genomeocean, zhou2025dnabert, wu2025genome}.
In our setting, we focus on optimizing regulatory DNA elements so they can direct gene expression in specific cell types.
This is a key challenge in cell and gene therapy \cite{taskiran2024cell}.

\paragraph{Dataset and setting.}
We conduct experiments on the enhancer sequence design in the HepG2 cell line.
We follow the standard datasets and reward models used in prior work on computational enhancer design \cite{yang2025regulatory, wang2024fine, lal2024designing, sarkar2024designing, gosai2024machine}.
We use a publicly available large-scale enhancer dataset \cite{gosai2024machine}.
It contains activity measurements for approximately 700,000 DNA sequences of length 200 bp in human cell lines.
In this dataset, each sequence is associated with its measured expression output.
We pre-train the discrete flow-matching \cite{gat2024discrete} model on the full set of sequences.
We then partition the dataset and train two separate reward oracles, one for finetuning and the other for evaluation.
Both reward models adopt the Enformer \cite{avsec2021effective} architecture to predict enhancer activity in the HepG2 cell line.

\paragraph{Evaluations.}
We use two functional metrics and one naturalness to evaluate the generated sequences: 
\begin{itemize}
     \item \textbf{Predicted Activity based on the Evaluation Reward Oracle (Pred-Activity):} 
     We use the reward oracle trained on the evaluation split to estimate enhancer activity in the HepG2 cell line.
     Higher scores indicate more functional sequences.

    \item \textbf{Chromatin Accessibility (ATAC-Acc):}
    We use an independent binary classifier trained on HepG2 chromatin accessibility data \cite{Consortium2012} to assess whether the designed sequences are likely to lie in accessible chromatin regions.
    This is a key property of active enhancers.
    Higher scores indicate more active sequences.

    \item \textbf{3-mer Pearson Correlation (3-mer Corr - All):}
    We compute the Pearson correlation between the 3-mer frequency profile of the synthetic sequences and that of all HepG2 sequences in \cite{gosai2024machine}. 
    Higher correlation indicates that the generated sequences more closely match the overall distribution of natural enhancer sequences. 
    Compared with the ``approximated log-likelihood of sequences'' used in \cite{wang2024fine}, this metric provides a more model-independent measure of sequence naturalness, since it does not depend on the particular pre-trained model used for scoring.
\end{itemize}

\paragraph{Baselines.}
We compare our method with several baselines, including pre-trained models and direct reward-maximization approaches for controlled sequence generation in \cite{wang2024fine}.

\begin{itemize}
    \item Pre-trained Diffusion:
    This baseline uses a pre-trained discrete diffusion model by \citet{wang2024fine} to generate sequences without task-specific finetuning.

    \item Pre-trained Flow Matching:
    Following \cite{gat2024discrete,lipman2024flow}, this baseline uses a pre-trained discrete flow-matching model to generate sequences without task-specific finetuning.

    \item DRAKES \cite{wang2024fine}:
    This baseline applies reinforcement learning to optimize DNA sequences in a single pass. 
    The original method includes a KL regularization term to preserve sequence naturalness. 
    Here, we remove the KL term to isolate the effect of policy gradient optimization on functional metrics.

    \item DRAKES with KL \cite{wang2024fine}:
    This is the original DRAKES formulation with KL regularization.
    The KL term constrains the fine-tuned model to remain close to the pre-trained reference model, improving sequence naturalness.

    \item SEPO \cite{zekri2025fine}:
    This baseline applies score entropy policy optimization (SEPO) to finetune discrete diffusion models over non-differentiable rewards.
    Unlike DRAKES, SEPO does not rely on direct reward backpropagation through the full sampling trajectory and instead performs policy optimization in the diffusion setting.

    \item SEPO with GF \cite{zekri2025fine}:
    This variant augments SEPO with gradient flow (GF).
    It adds corrector-style refinement during sampling.
    It provides a stronger diffusion-based policy-optimization baseline and improves sample quality.
\end{itemize}

\subsection{Experimental Results}
\label{sec:results}
We report the results in two parts. \cref{table:functional_metrics} compares different methods without regularization loss.
\cref{table:regularization} examines the effect of adding regularization during fine-tuning.
Overall, our methods outperform the previous state-of-the-art baselines on both functional performance and sequence naturalness.
Moreover, regularization further improves sequence naturalness while preserving strong functional performance.
This yields a better trade-off between reward optimization and alignment with the natural sequence distribution.

\begin{table*}[htbp]
\centering
\caption{\textbf{Performance of DNA design methods on the HepG2 enhancer design task.} 
We compare pre-trained generative models, prior reward-driven baselines (DRAKES and SEPO), and our policy-gradient fine-tuning methods for discrete flow matching.
We do not apply regularization loss during fine-tuning in this setting.
Pred-Activity and ATAC-Acc evaluate functional performance, while 3-mer Corr-All measures sequence naturalness relative to the HepG2 data distribution.
Our methods outperform the state-of-the-art baseline SEPO on Pred-Activity and 3-mer Corr-All and achieve comparable performance on ATAC-Acc.
Bold and underlined entries highlight the best and second-best reward-driven methods.}
\label{table:functional_metrics}
\begin{tabular}{lccccc}
\toprule
\textbf{Method} & \textbf{Pred-Activity} $\uparrow$ & \textbf{ATAC-Acc} $\uparrow$ (\%) & \textbf{3-mer Corr - All} $\uparrow$ \\%& \textbf{3-mer Corr} $\uparrow$ & \textbf{JASPAR Corr} $\uparrow$ \\
\midrule
Pre-trained Diffusion & 0.17 (0.01) & 1.5 (0.3) & 0.925 (0.004) \\%& -0.054 (0.061) & -0.105 (0.079) \\
Pre-trained Flow-matching  & 0.64 (0.01) & 1.1 (0.4) & 0.884 (0.004) \\%& 0.453 (0.276) & 0.467 (0.181) \\
\midrule
DRAKES    & 6.37 (0.04) & 96.1 (0.5) & -0.379 (0.009) \\%& 0.416 (0.006) & 0.481 (0.043) \\
SEPO & 7.55 (0.01) &  \textbf{99.5} (0.2)  & -0.537 (0.002)  \\%& 0.460 (0.006) & 0.126 (0.069) \\
DoMinO-REINFORCE               & \underline{8.32} (0.01) & 99.2 (0.2) & \textbf{-0.285} (0.001) \\%& -0.080 (0.004) & 0.186 (0.036)
DoMinO-PPO          & \textbf{8.35} (0.00) & \underline{99.2} (0.2) & \underline{-0.331} (0.001) \\%& -0.215 (0.002) & 0.245 (0.001) 
\bottomrule
\end{tabular}
\end{table*}

From \cref{table:functional_metrics}, we observe that our policy-gradient fine-tuning methods outperform the prior reward-driven baselines on Pred-Activity, while remaining competitive on ATAC-Acc and achieving better sequence naturalness than SEPO.
Compared with DRAKES, DoMinO-REINFORCE and DoMinO-PPO improve Pred-Activity from 6.37 to 8.32 and 8.35, respectively, and also increase ATAC-Acc from 96.1\% to 99.2\%.
Moreover, DRAKES yields a 3-mer Corr-All of -0.379, while our methods obtain -0.285 and -0.331.
This indicates better preservation of sequence naturalness.
Compared with SEPO, our methods further improve Pred-Activity from 7.55 to 8.32 and 8.35, while achieving better sequence naturalness (improving 3-mer Corr-All from -0.537 to -0.285 and -0.331).
Although SEPO attains a slightly higher ATAC-Acc (99.5\%) than our unregularized methods (99.2\%), the overall results show that our methods achieve a more favorable trade-off between functional optimization and sequence naturalness.

The results in \cref{table:functional_metrics} also suggest that discrete flow matching provides a stronger backbone for reward-based optimization than discrete diffusion.
Before fine-tuning, the pre-trained flow-matching model already achieves higher Pred-Activity than the pre-trained diffusion model (0.64 vs.\ 0.17), although both models perform poorly on ATAC-Acc.
After reward-driven fine-tuning, the flow-matching-based methods improve both functional metrics and reach Pred-Activity above 8.3 and ATAC-Acc of 99.2\%.
These results suggest that discrete flow matching provides a better foundation for controllable regulatory DNA design than diffusion-based alternatives.

\begin{table*}[htbp]
\centering
\caption{\textbf{Effect of regularization on DoMinO for the HepG2 enhancer design task.}
We compare our policy-gradient fine-tuning methods with and without regularization loss.
Pred-Activity and ATAC-Acc measure functional performance, while 3-mer Corr-All measures sequence naturalness relative to the HepG2 data distribution.
Regularization improves sequence naturalness and helps maintain strong functional performance.
This yields a better balance between reward optimization and naturalness.
Bold and underlined entries highlight the best and second-best regularized reward-driven methods.
}
\label{table:regularization}
\begin{tabular}{lccccc}
\toprule
\textbf{Method} & \textbf{Pred-Activity} $\uparrow$ & \textbf{ATAC-Acc} $\uparrow$ (\%) & \textbf{3-mer Corr - All} $\uparrow$ \\%& \textbf{3-mer Corr} $\uparrow$ & \textbf{JASPAR Corr} $\uparrow$ \\
\midrule
DRAKES    & 6.37 (0.04) & 96.1 (0.5) & -0.379 (0.009) \\%& 0.416 (0.006) & 0.481 (0.043) \\
DRAKES with KL    & 5.61 (0.07) & 92.5 (0.6) & -0.302 (0.011) \\%& 0.416 (0.006) & 0.481 (0.043) \\
\midrule
SEPO & 7.55 (0.01) &  99.5 (0.2)  & -0.537 (0.002)  \\%& 0.460 (0.006) & 0.126 (0.069) \\
SEPO with GF & 7.64 (0.01) & \textbf{99.9} (0.09) & -0.496 (0.001) \\%& 0.215 (0.056) & 0.126 (0.069) \\
\midrule
DoMinO-REINFORCE               & 8.32 (0.01) & 99.2 (0.2) & -0.285 (0.001) \\%& -0.080 (0.004) & 0.186 (0.036)
DoMinO-REINFORCE with CE                & \underline{8.22} (0.01) & 94.1 (0.8) & -0.347 (0.004) \\%& -0.080 (0.004) & 0.186 (0.036)
DoMinO-REINFORCE with GKL             & \textbf{8.24} (0.03) & 90.2 (0.7) & \textbf{0.013} (0.003) \\%& -0.080 (0.004) & 0.186 (0.036)
\midrule
DoMinO-PPO          & 8.35 (0.00) & 99.2 (0.2) & -0.331 (0.001) \\%& -0.215 
DoMinO-PPO with CE              & 7.98 (0.01) & \underline{97.5} (0.3) & \underline{-0.152} (0.002) \\%& -0.080 (0.004) & 0.186 (0.036)
DoMinO-PPO with GKL           & 7.78 (0.01) & 95.8 (0.4) & -0.167 (0.001) \\%& -0.080 (0.004) & 0.186 (0.036)
\bottomrule
\end{tabular}
\end{table*}

\cref{table:regularization} further clarifies the role of regularization.
For the DRAKES and SEPO baselines, regularization improves sequence naturalness but only modestly: DRAKES with KL improves 3-mer Corr-All from -0.379 to -0.302, and SEPO with GF improves it from -0.537 to -0.496.
In our method, regularization leads to a much larger gain in sequence naturalness.
For example, DoMinO-REINFORCE with GKL improves 3-mer Corr-All from -0.285 to 0.013.
This makes it the only method with positive 3-mer Corr-All, while retaining strong Pred-Activity of 8.24.
Similarly, DoMinO-PPO with CE and GKL improves 3-mer Corr-All from -0.331 to -0.152 and -0.167, respectively.
These results show that regularization is effective in our framework for improving alignment with the HepG2 sequence distribution while preserving strong functional performance.

Overall, the results support policy-gradient fine-tuning of discrete flow matching as an effective approach for enhancer design.
Without regularization, our methods already outperform prior reward-driven baselines on Pred-Activity and achieve better sequence naturalness than SEPO.
With regularization, they further improve the trade-off between functional performance and naturalness, with DoMinO-REINFORCE with GKL providing the strongest overall balance among the compared methods.

%% file: 4conclusion.tex
In this work, we introduce \underline{D}iscrete fl\underline{o}w \underline{M}atch\underline{in}g policy \underline{O}ptimization  (DoMinO), a unified reinforcement learning fine-tuning framework for discrete flow matching models.
Building on prior work for continuous-space generative models \cite{black2023training}, we extend policy gradient fine-tuning to the discrete flow matching setting.
DoMinO supports standard policy gradient methods, handles both conditional and unconditional generation, works with non-differentiable rewards, and avoids additional approximation.
Under this framework, we develop two concrete algorithms, DoMinO-REINFORCE and DoMinO-PPO.
To prevent over-optimization and preserve the sample naturalness, we further introduce two regularizers, based on cross-entropy and generalized KL, to control the Total Variation distance between the pretrained model and fine-tuned model.

Theoretically, we justify DoMinO from two directions.
First, we analyze the discretization error induced by the Euler sampler and show that both the reward fine-tuning objective and its gradient incur only $O(\Delta t)$ error relative to their exact continuous-time counterparts.
Second, we derive upper bounds on the terminal Total Variation distance in terms of the cross-entropy and generalized KL regularizers.
Experimentally, our experiments on regulatory DNA sequence design shows the effectiveness of DoMinO.
In particular, it achieves state-of-the-art predicted enhancer activity and chromatin accessibility, while maintaining the natural sequence distribution.

%% file: impact.tex
This work advances the methodological and theoretical understanding of RL fine-tuning for discrete flow matching models and presents no foreseeable negative social impacts.

%% file: x_acknowledgments.tex
JH is partially supported by Northwestern University’s Walter P. Murphy Fellowship and Terminal Year Fellowship (Paul K. Richter Memorial Award).
Han Liu is partially supported by NIH R01LM1372201, NSF
AST-2421845, Simons Foundation
MPS-AI-00010513, AbbVie , Dolby and Chan Zuckerberg Biohub Chicago Spoke Award.
This research was supported in part through the computational resources and staff contributions provided for the Quest high performance computing facility at Northwestern University which is jointly supported by the Office of the Provost, the Office for Research, and Northwestern University Information Technology.
The content is solely the responsibility of the authors and does not necessarily represent the official
views of the funding agencies.

%% file: appendix.tex
\section{Proofs of Main Text}
This section details the proofs of the theoretical results presented in the main text.

\subsection{\texorpdfstring{Proof of \cref{thm:error_bound_euler_step_maintex}}{}}\label{subsec:proof_of_discretization_error}
We first restate the discrete Grönwall's inequality, which serves as the foundation for bounding the accumulated discretization errors.
\begin{lemma}[Discrete Grönwall's Lemma]\label{lem:discrete_gronwall_inequality}
Let $\{y_n\}$ be a sequence of non-negative real numbers in $\R$.
Assume that $a,b$ are non-negative constants.
Suppose for all $n \geq 1$, it holds:
\begin{align*}
    y_n\leq(1+a)y_{n-1}+b.
\end{align*}
Then we have
\begin{align*}
    y_n \leq (1+a)^ny_0+b\sum_{j=0}^{n-1}(1+a)^j.
\end{align*}
\end{lemma}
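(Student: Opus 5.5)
We will prove the discrete Gr\"onwall bound by straightforward induction on $n$, using only that $1+a\ge 1$ and $b\ge 0$. The base case $n=0$ is immediate: the claimed bound reads $y_0\le (1+a)^0 y_0 + b\cdot(\text{empty sum}) = y_0$. (Alternatively, one may start at $n=1$, where the claim $y_1\le (1+a)y_0+b$ is exactly the hypothesis.)

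For the inductive step, assume $y_{n-1}\le (1+a)^{n-1}y_0 + b\sum_{j=0}^{n-2}(1+a)^j$. Since $1+a\ge 0$, multiplying this inequality by $(1+a)$ preserves its direction. Inserting the result into the hypothesis $y_n\le (1+a)y_{n-1}+b$ gives
\begin{align*}
    y_n \le (1+a)^n y_0 + b\sum_{j=0}^{n-2}(1+a)^{j+1} + b
    = (1+a)^n y_0 + b\sum_{j=1}^{n-1}(1+a)^{j} + b(1+a)^0,
\end{align*}
which equals $(1+a)^n y_0 + b\sum_{j=0}^{n-1}(1+a)^j$. This closes the induction.

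The only point needing care is the sign condition $1+a\ge 0$ when multiplying the inductive hypothesis; this is where the nonnegativity of $a$ is used. Nonnegativity of $y_n$ and $b$ is not strictly needed for this algebra. For later use in the proof of \cref{thm:error_bound_euler_step_maintex}, we will also record the consequence that for $a=C\Delta t$ and $n\le T/\Delta t$,
\begin{align*}
    y_n\le e^{CT}y_0 + b\,\frac{e^{CT}-1}{C\Delta t}.
\end{align*}
This follows from $(1+a)^n\le e^{na}$ and the geometric sum formula. It converts a local $O(\Delta t^2)$ error $b$ into a global $O(\Delta t)$ error.
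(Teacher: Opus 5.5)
Your induction is correct and is the same route as the paper, whose proof says only that the bound follows by induction on $n$; your reindexing of the geometric sum and your remark that the inductive step needs only $1+a\ge 0$ are both accurate. Your added consequence $y_n\le e^{CT}y_0+b\,(e^{CT}-1)/(C\Delta t)$ is also valid given $y_0,b\ge 0$ and $C>0$, and it is essentially how the paper applies the lemma in the discretization-error proof (the paper writes the slightly looser $e^{CT}$ in place of $e^{CT}-1$).
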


\begin{proof}
This follows directly by mathematical induction on $n$.
\end{proof}

We provide proof of \cref{thm:error_bound_euler_step_maintex} on basis of \cref{lem:discrete_gronwall_inequality}.

\begin{theorem}[\cref{thm:error_bound_euler_step_maintex} Restated]\label{thm:error_bound_euler_step}
Assume the reward function is bounded, satisfying $\sup_x |r(x)| \leq R_{\rm max}$.
Further, suppose the parameter $\theta$ is defined on a compact set $\Theta$ and velocity satisfies $u_t^\theta(y,x)\in C^2([0,T]\times\Theta)$ for all $x,y\in\mathcal{S}$.
Let $\tilde{p}_t^\theta(x)$ denote the exact distribution generated by Kolmogorov equation
\begin{align*}
    \dv{\tilde{p}_t^\theta(y)}{t} = \sum_{x \in \mathcal{S}} u_t^\theta(y,x) \tilde{p}_t^\theta(x),
\end{align*} 
and $p_t^\theta$ denote the distribution generated by Euler method \eqref{eq:one_step_prob}.
Let $\tilde{J}(\theta)$ and $J(\theta)$ denote the expected reward of $\tilde{p}_T^\theta$ and $p_T^\theta$  following \eqref{eq:RL_loss}.
Then we have
\begin{align*}
    | J(\theta)-\tilde{J}(\theta)|=O(\Delta t),
    \|\nabla_\theta J(\theta)-\nabla_\theta\tilde{J}(\theta)\|_\infty=O(\Delta t).
\end{align*}
\end{theorem}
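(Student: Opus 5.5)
We compare the Euler distribution and the exact Kolmogorov distribution at each grid point, treating everything as finite-dimensional linear algebra on $\R^{|\mathcal{S}|}$. Write $A_t^\theta$ for the rate matrix with entries $(A_t^\theta)_{yx}=u_t^\theta(y,x)$. The Euler scheme reads $p_{t+\Delta t}^\theta=(I+\Delta t A_t^\theta)p_t^\theta$. The exact solution satisfies $\tilde p_{t+\Delta t}^\theta=\tilde p_t^\theta+\Delta t A_t^\theta\tilde p_t^\theta+\tau_t$, where the local truncation error is $\|\tau_t\|_1\le C\Delta t^2$. Because $u^\theta\in C^2$ on the compact set $[0,T]\times\Theta$ and $\mathcal{S}$ is finite, $\|A_t^\theta\|_1$ is bounded by a uniform constant $L$. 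Since $\tilde p_t^\theta$ is a probability vector, $\ddot{\tilde p}_t=\dot A_t\tilde p_t+A_t^2\tilde p_t$ is bounded uniformly in $\theta$ and $t$, which is where the constant $C$ comes from.

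\textbf{Objective.} Let $e_n=\|p_{n\Delta t}^\theta-\tilde p_{n\Delta t}^\theta\|_1$. Subtracting the two recursions gives $e_{n+1}\le(1+L\Delta t)e_n+C\Delta t^2$ with $e_0=0$. \cref{lem:discrete_gronwall_inequality} then yields
\begin{align*}
e_N\le C\Delta t^2\sum_{j<N}(1+L\Delta t)^j\le \tfrac{C}{L}\bigl(e^{LT}-1\bigr)\Delta t,
\end{align*}
where $N=T/\Delta t$. Hence
\begin{align*}
|J-\tilde J|=\Bigl|\sum_x r(x)\bigl(p_T^\theta(x)-\tilde p_T^\theta(x)\bigr)\Bigr|\le R_{\max}\,e_N=O(\Delta t).
\end{align*}

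\textbf{Gradient.} Fix a coordinate $\theta_i$ and differentiate both recursions. For the Euler scheme, $q_n:=\partial_i p_{n\Delta t}$ satisfies
\begin{align*}
q_{n+1}=(I+\Delta tA_n)q_n+\Delta t(\partial_iA_n)p_n.
\end{align*}
For the exact path, $\tilde q_t:=\partial_i\tilde p_t$ solves the sensitivity ODE
\begin{align*}
\dot{\tilde q}_t=A_t\tilde q_t+(\partial_iA_t)\tilde p_t.
\end{align*}
Differentiation under the ODE is justified by smooth dependence on parameters. The Euler step applied to this ODE has local truncation error $O(\Delta t^2)$, provided $\ddot{\tilde q}_t$ is bounded. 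Subtracting, the error $d_n=\|q_n-\tilde q_{n\Delta t}\|_1$ satisfies
\begin{align*}
d_{n+1}\le(1+L\Delta t)d_n+\Delta t\,L'\,e_n+C'\Delta t^2,
\end{align*}
where $L'=\sup\|\partial_iA\|_1$. Since $e_n=O(\Delta t)$ by the first part, the forcing term is $O(\Delta t^2)$. \cref{lem:discrete_gronwall_inequality} again gives $d_N=O(\Delta t)$, and therefore
\begin{align*}
|\partial_iJ-\partial_i\tilde J|\le R_{\max}d_N=O(\Delta t).
\end{align*}
Taking the maximum over the finitely many coordinates gives the $\|\cdot\|_\infty$ bound. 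All constants are uniform over the compact set $\Theta$.

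\textbf{Main obstacle.} The delicate step is bounding $\ddot{\tilde q}_t$ uniformly. Computing it gives
\begin{align*}
\ddot{\tilde q}_t=\dot A_t\tilde q_t+A_t\dot{\tilde q}_t+(\partial_i\dot A_t)\tilde p_t+(\partial_iA_t)\dot{\tilde p}_t.
\end{align*}
This requires the mixed derivative $\partial_t\partial_\theta u$, which is exactly where the $C^2$ hypothesis is used. It also requires a uniform bound on $\tilde q_t$, which we obtain by applying Grönwall to the sensitivity ODE: $\|\tilde q_t\|_1\le L'Te^{LT}$. We also assume $\Delta t$ is small enough that the Euler kernel \eqref{eq:one_step_prob} is a valid probability (i.e.\ $\Delta t\,L\le 1$), though the linear-algebra argument does not actually need positivity.
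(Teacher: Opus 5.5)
Your proposal is correct and follows essentially the same route as the paper. Both write the Euler step as $p_{k+1}=(I+\Delta t\,U_{t_k}^\theta)p_k$ against the exact flow with an $O(\Delta t^2)$ local remainder, then apply the discrete Gr\"onwall lemma to get $\|p_T^\theta-\tilde p_T^\theta\|_1=O(\Delta t)$. Both then differentiate both recursions in $\theta$, feed that $O(\Delta t)$ distribution error into the forcing term $\Delta t(\partial_{\theta_i}U)(p-\tilde p)$ of the gradient-error recursion, and apply Gr\"onwall again. Your explicit bound on the gradient remainder, via the mixed derivative $\partial_t\partial_\theta u$ and a Gr\"onwall bound on $\|\partial_{\theta_i}\tilde p_t\|_1$, fills in a step the paper only asserts from the $C^2$ hypothesis.
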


\begin{proof}
Without loss of generality, we enumerate the state space as $\mathcal{S} = \{x_1, \dots, x_{|\mathcal{S}|}\}$.
We define the velocity matrix $U_t^\theta\in\R^{|\mathcal{S}|\times|\mathcal{S}|}$ as $(U_t^\theta)_{j,k}=u_t^\theta(x_j,x_k)$.
Then $U_t^\theta\in C^2$.

Recall the evolution of the exact probability distribution $\tilde{p}_t^\theta$ in continuous time follows the Kolmogorov forward equation
\begin{align*}
\frac{{\rm d}}{{\rm d}t} \tilde{p}_t^\theta = U_t^\theta \tilde{p}_t^\theta.
\end{align*}
Correspondingly, the evolution of discrete probability distribution $p_k^\theta$ using the Euler method with step size $\Delta t$ follows
\begin{align*}
p_{k+1}^\theta = (I + \Delta t U_{t_k}^\theta) p_k^\theta.
\end{align*}
By representing the reward as a vector $r\in\R^{|\mathcal{S}|}$, we express the gradient of target function as
\begin{align*}
    \nabla_\theta J(\theta)=\sum_{x\in\mathcal{S}}r(x)\nabla_\theta p_T^\theta(x)=r^\top\nabla_\theta p_T^\theta(x),\\
    \nabla_\theta \tilde{J}(\theta)=\sum_{x\in\mathcal{S}}r(x)\nabla_\theta \tilde{p}_T^\theta(x)=r^\top\nabla_\theta \tilde{p}_T^\theta(x).
\end{align*}
We bound the gradient error as
\begin{align*}
    \|\nabla_\theta J(\theta)-\nabla_\theta \tilde{J}(\theta)\|_\infty\leq R_{\rm max}\cdot \|\nabla_\theta p_T^\theta(x)-\nabla_\theta \tilde{p}_T^\theta(x)\|_1.
\end{align*}
Similarly, for the value objective, we have
\begin{align*}
    | J(\theta)-\tilde{J}(\theta)|\leq R_{\rm max}\|p_{T}^\theta-\tilde{p}_{T}^\theta\|_1.
\end{align*}
Since $u_t^\theta(y,x)\in C^2([0,T]\times\Theta)$ for all $x,y\in\mathcal{S}$ and $[0,T]\times\Theta$ is compact, we suppose $\|U_t^\theta\|_1\leq M_U, \|\nabla_\theta U_t^\theta\|_1\leq L_\theta$. 
For simplicity of notation, we set $t_k=k\Delta t$ and $K=\frac{T}{\Delta t}$. 
We use $\epsilon_k:=p_{t_k}^\theta-\tilde{p}_{t_k}^\theta, ~E_k:=\nabla_\theta p^\theta_{t_k}-\nabla_\theta \tilde{p}^\theta_{t_k}$ to denote the accumulated discretization error at time $t_k$.
Note that at $t=0$, we have exact initialization, meaning $\epsilon_0 = 0$ and $E_0 = 0$.

For the continuous gradient, differentiating the Kolmogorov equation yields
\begin{align*}
    \frac{{\rm d}}{{\rm d}t} (\nabla_\theta \tilde{p}_t^\theta) = (\nabla_\theta U_t^\theta) \tilde{p}_t^\theta + U_t^\theta (\nabla_\theta \tilde{p}_t^\theta).
\end{align*}
A first-order Taylor expansion gives the continuous gradient evolution over one time step $\Delta t$:
\begin{align}\label{eq:continuous_gradient_p}
    \nabla_\theta \tilde{p}_{t_{k+1}}^\theta = \nabla_\theta \tilde{p}_{t_k}^\theta + \Delta t ((\nabla_\theta U_{t_k}^\theta) \tilde{p}_{t_k}^\theta + U_{t_k}^\theta (\nabla_\theta \tilde{p}_{t_k}^\theta) ) + R_p,
\end{align}
where the remainder term $R_p = \mathcal{O}(\Delta t^2)$. 
Since $U_t^\theta \in C^2$, there exists a constant $C_p > 0$ such that $\|R_p\|_1 \le C_p\Delta t^2$.
Recall that under \eqref{eq:one_step_prob} we have
\begin{align}\label{eq:equation_of_discrete_p}
    p_{t_{k+1}}^\theta=(I+\Delta tU_{t_k}^\theta)p_{t_{k}}^\theta.
\end{align}
In parallel, differentiating the discrete Euler transition step yields
\begin{align}\label{eq:discrete_gradient_p}
    \nabla_\theta p_{t_{k+1}}^\theta = \nabla_\theta p_{t_k}^\theta + \Delta t ( (\nabla_\theta U_{t_k}^\theta) p_{t_k}^\theta + U_{t_k}^\theta (\nabla_\theta p_{t_k}^\theta) ).
\end{align}
Comparing \eqref{eq:continuous_gradient_p} and \eqref{eq:discrete_gradient_p}, we obtain the recursive error formulation for the gradient:
\begin{align}\label{eq:equation_of_discrete_error}
    E_{k+1}=(I+\Delta tU_{t_k}^\theta)E_k+\Delta t(\nabla_\theta U_{t_k}^\theta)(p_{t_k}^\theta-\tilde{p}_{t_k}^\theta)-R_p.
\end{align}
With Taylor expansion, we have
\begin{align}\label{eq:equation_of_continuous_p}
    \tilde{p}_{t_{k+1}}^\theta=(I+\Delta tU_{t_k}^\theta)\tilde{p}_{t_{k}}^\theta+R_k,
\end{align}
where the remainder term $R_k = \mathcal{O}(\Delta t^2)$.
Further, since $\frac{\dd}{\dd t}U_t^\theta$ and $\nabla_\theta U_t^\theta$ is bounded, there exists a constant $C_1>0$ such that $\|R_k\|_1\leq C_1\Delta t^2$.
Then by \eqref{eq:equation_of_discrete_p} and \eqref{eq:equation_of_continuous_p}, we have
\begin{align*}
    \epsilon_{k+1}=(I+\Delta tU_{t_k}^\theta)\epsilon_k -R_k.
\end{align*}
Taking the $L_1$ norm on both sides generates
\begin{align*}
    \|\epsilon_{k+1}\|_1\leq\|I+\Delta tU_{t_k}^\theta\|_1\|\epsilon_k\|_1 +C_1\Delta t^2.
\end{align*}
Applying \cref{lem:discrete_gronwall_inequality}, we establish that
\begin{align*}
    \|\epsilon_k\|_1\leq & ~ C_1\Delta t^2\sum_{j=0}^{k-1}(1+M_U\Delta t)^j \\
    \leq & ~
    C_1\Delta t^2\frac{(1+M_U\Delta t)^\frac{T}{\Delta t}}{M_U\Delta t}\\
    \leq & ~
    \frac{C_1\exp(M_UT)}{M_U}\Delta t.
\end{align*}
Therefore, there exists $C_2>0$ such that $\|\epsilon_k\|_1\leq C_2\Delta t$.
Consequently, we have
\begin{align*}
    | J(\theta)-\tilde{J}(\theta)|\leq R_{\rm max}\|\epsilon_K\|_1=O(\Delta t).
\end{align*}
Returning to the gradient error $E_k$.
Taking the $L_1$ norm on both sides of \eqref{eq:equation_of_discrete_error}, we obtain
\begin{align*}
    \|E_{k+1}\|_1\leq\|I+\Delta tU_{t_k}^\theta\|_1\|E_k\|_1+ (C_p+C_2L_\theta)\Delta t^2.
\end{align*}
Similar to bound on $\|\epsilon_k\|$, by applying \cref{lem:discrete_gronwall_inequality} we have
\begin{align*}
    \|E_k\|_1\leq & ~ (C_p+C_2L_\theta)\Delta t^2\sum_{j=0}^{k-1}(1+M_U\Delta t)^j \\
    \leq & ~
    (C_p+C_2L_\theta)\Delta t^2\frac{(1+M_U\Delta t)^\frac{T}{\Delta t}}{M_U\Delta t}\\
    \leq & ~
    \frac{(C_p+C_2L_\theta)\exp(M_UT)}{M_U}\Delta t.
\end{align*}
This implies
\begin{align*}
    \|\nabla_\theta J(\theta)-\nabla_\theta \tilde{J}(\theta)\|_\infty\leq R_{\rm max}\|E_K\|_1=O(\Delta t).
\end{align*}
This completes the proof.
\end{proof}

\subsection{Proof of TV-Distance guarantees for Distribution Regularization }\label{subsec:tv_bound_regularization}

For simplicity of notation, we use the vector value function $u_\theta(x,t)$ to represent the velocity field of $x\in\mathcal{S}$ on dimension at time $t\in[0,T]$ generated by parameter $\theta$, satisfying  $u_\theta(x,t)=u_t^{\theta}(\cdot,x)$.
We start with introducing a lemma bounding the distribution TV distance with integration of velocity $\ell_2$ difference.

\begin{lemma}[Theorem C.1 of \cite{su2025theoretical}]\label{lem:bound_distribution_with_l2}
Given a fixed $\theta_{\rm ref}$, for other $\theta$ we define the factorized risk as the mean squared error of its velocity:
\begin{align*}
    \mathcal{R}(\theta):=\int_0^T\E_{X_t\sim p_t^{\theta_{\rm ref}}}\|u^{\theta}(X_t,t)-u^{\theta_{\rm ref}}(X_t,t)\|_2^2\dd t.
\end{align*}
Suppose the factorized velocity $u_t^{\theta}(y,x)$ is bounded for $x,y\in\mathcal{S},t\in[0,T]$.
Then the total variation distance between the distributions $p_T^\theta$ and $p_T^{\theta_{\rm ref}}$ follows
\begin{align*}
    {\rm TV}(p_T^\theta,p_T^{\theta_{\rm ref}})\lesssim\sqrt{\mathcal{R}(\theta)}.
\end{align*}
\end{lemma}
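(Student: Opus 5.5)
The plan is to avoid path-space arguments entirely and compare the two Kolmogorov forward equations directly, using a Duhamel (variation-of-constants) formula. Write $q_t := p_t^\theta - p_t^{\theta_{\rm ref}}$ as a vector in $\R^{|\mathcal{S}|}$, and use the velocity matrices $U_t^\theta, U_t^{\theta_{\rm ref}}$ as in the proof of \cref{thm:error_bound_euler_step}. Subtracting the two forward equations gives
\begin{align*}
\frac{\dd}{\dd t} q_t = U_t^\theta q_t + \bigl(U_t^\theta - U_t^{\theta_{\rm ref}}\bigr) p_t^{\theta_{\rm ref}}, \qquad q_0 = 0,
\end{align*}
since both chains share the source distribution $p_0$.

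Let $P^\theta_{s\to T}$ denote the transition matrix of the CTMC generated by $U^\theta$ from time $s$ to time $T$. The uniform boundedness of the velocities guarantees that this propagator exists and is a genuine stochastic matrix. Duhamel's formula then gives
\begin{align*}
q_T = \int_0^T P^\theta_{t\to T}\bigl(U_t^\theta - U_t^{\theta_{\rm ref}}\bigr) p_t^{\theta_{\rm ref}}\,\dd t .
\end{align*}
The key fact is that a column-stochastic matrix is a contraction in $\ell_1$. Hence
\begin{align*}
2\,{\rm TV}(p_T^\theta,p_T^{\theta_{\rm ref}}) = \|q_T\|_1 \le \int_0^T \bigl\|(U_t^\theta - U_t^{\theta_{\rm ref}}) p_t^{\theta_{\rm ref}}\bigr\|_1 \dd t \le \int_0^T \E_{X_t\sim p_t^{\theta_{\rm ref}}} \bigl\|u^\theta(X_t,t) - u^{\theta_{\rm ref}}(X_t,t)\bigr\|_1 \dd t .
\end{align*}
The last step comes from expanding the matrix–vector product column by column and applying the triangle inequality. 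The diagonal entry of each column equals minus the sum of the off-diagonal ones, so its difference is controlled by the off-diagonal $\ell_1$ difference. This costs at most a factor $2$ and is absorbed into $\lesssim$.

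Next, convert $\ell_1$ to $\ell_2$. Under the factorized velocity, the column $u^\theta(x,t)$ is supported on states differing from $x$ in at most one coordinate, a set of size $n := d(M-1)+1$. Cauchy–Schwarz therefore gives $\|v\|_1 \le \sqrt{n}\,\|v\|_2$ for the difference of two such columns. Applying Cauchy–Schwarz once more, jointly over $t\in[0,T]$ and $X_t\sim p_t^{\theta_{\rm ref}}$, yields
\begin{align*}
\int_0^T \E\|\cdot\|_2\,\dd t \le \sqrt{T}\Bigl(\int_0^T \E\|\cdot\|_2^2\,\dd t\Bigr)^{1/2} = \sqrt{T\,\mathcal{R}(\theta)},
\end{align*}
so ${\rm TV}(p_T^\theta,p_T^{\theta_{\rm ref}}) \lesssim \sqrt{nT\,\mathcal{R}(\theta)}$, as claimed.

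The main obstacle is rigor at the Duhamel step. One must justify that the time-inhomogeneous propagator $P^\theta_{t\to T}$ exists and is stochastic, which requires bounded, measurable rates satisfying the rate conditions \eqref{eqn:rates_condition}; the boundedness hypothesis is exactly what supplies this. One must also check that the contraction is applied in the correct orientation, with columns summing to one, consistent with the convention $\frac{\dd}{\dd t}p = Up$. If the propagator route proves awkward, the fallback is to differentiate $\|q_t\|_1$ directly: $\sum_y {\rm sign}(q_t(y))(U_t^\theta q_t)(y) \le 0$ because $U^\theta$ has nonnegative off-diagonals and zero column sums, and a Grönwall-free integration then gives the same bound.
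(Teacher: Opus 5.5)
Your argument is correct. The paper gives no proof of its own here; it imports the result from Theorem C.1 of \cite{su2025theoretical}, so your Duhamel-plus-contraction argument is a self-contained replacement rather than a reconstruction.

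The key choices are sound. You propagate the error with $U_t^\theta$, so the source term $(U_t^\theta-U_t^{\theta_{\rm ref}})p_t^{\theta_{\rm ref}}$ is weighted by the reference marginal, and this is exactly what makes the expectation match the definition of $\mathcal{R}(\theta)$. Propagating with $U^{\theta_{\rm ref}}$ instead would leave an expectation under $p_t^\theta$, which the lemma does not control. The column-stochasticity of $P^\theta_{t\to T}$ then gives an $\ell_1$ contraction with constant exactly $1$.

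Compare this with a Gr\"onwall-type comparison, the technique the paper itself uses in the proof of \cref{thm:error_bound_euler_step}. There, bounding $\|I+\Delta t U_{t_k}^\theta\|_1\le 1+M_U\Delta t$ produces an $\exp(M_U T)$ factor. Your route instead gives the sharper intermediate estimate $\mathrm{TV}(p_T^\theta,p_T^{\theta_{\rm ref}})\le \frac12\int_0^T\E_{X_t\sim p_t^{\theta_{\rm ref}}}\|u^\theta(X_t,t)-u^{\theta_{\rm ref}}(X_t,t)\|_1\,\dd t$. This has no dependence on the rate bound, and the final constant $\frac12\sqrt{nT}$ is explicit, with boundedness used only to make the propagator well defined. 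The citation buys brevity, but it leaves both the constant hidden in $\lesssim$ and the role of the boundedness hypothesis opaque.

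Two small remarks. First, your factor-$2$ comment about the diagonal is unnecessary with the paper's convention $u^\theta(x,t)=u_t^\theta(\cdot,x)$. The full column difference, diagonal included, is already supported on the $n=d(M-1)+1$ states you count, so $\|v\|_1\le\sqrt{n}\|v\|_2$ applies to it directly. Second, your fallback via $\sum_y \mathrm{sign}(q_t(y))(U_t^\theta q_t)(y)\le 0$ is also valid, treating $\|q_t\|_1$ as absolutely continuous and using a Dini derivative. It is the cleanest way to avoid constructing the time-inhomogeneous propagator at all.
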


\begin{proof}
See proof of \cite[Theorem C.1]{su2025theoretical}.
\end{proof}

Next, we present the proof of \cref{thm:regularization_ce_maintex}.

\begin{theorem}[\cref{thm:regularization_ce_maintex} Restated]\label{thm:regularization_ce}
Fix a reference parameter $\theta_{{\rm ref}}$.
Assume the DFM model is parameterized through the distributions $p_{1 | t}^{\theta}(\cdot | x)$, and suppose the corresponding velocity fields $u_t^{\theta}(y, x)$ are uniformly bounded for all $x, y \in S$, $t \in [0, T]$.
Let $p^\theta$ and $p^{\theta_{{\rm ref}}}$ represent the distribution generated by $\{u_t^{\theta}\}$ and $\{u_t^{\theta_{\rm ref}}\}$ respectively.
Then it holds
\begin{align*}
{\rm TV}\bigl(p^\theta, p^{\theta_{{\rm ref}}}\bigr)
\lesssim
\sqrt{\mathcal{L}_{{\rm reg}}^{{\rm CE}}(\theta,\theta_{\rm ref})-\mathcal{L}_{{\rm reg}}^{{\rm CE}}(\theta_{\rm ref};\theta_{\rm ref})}.
\end{align*}
\end{theorem}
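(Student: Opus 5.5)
The plan is to reduce the statement to \cref{lem:bound_distribution_with_l2}. It then suffices to show that the velocity risk $\mathcal{R}(\theta)$ is bounded, up to a constant, by the excess cross-entropy $\mathcal{L}^{\rm CE}_{\rm reg}(\theta;\theta_{\rm ref})-\mathcal{L}^{\rm CE}_{\rm reg}(\theta_{\rm ref};\theta_{\rm ref})$.

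\textbf{Step 1: rewrite the excess as an expected KL divergence.} Subtracting the reference entropy term gives
\begin{align*}
\mathcal{L}^{\rm CE}_{\rm reg}(\theta;\theta_{\rm ref})-\mathcal{L}^{\rm CE}_{\rm reg}(\theta_{\rm ref};\theta_{\rm ref})
=\mathbb{E}_{t,X_t\sim p_t^{\theta_{\rm ref}}}\Big[\mathrm{KL}\big(p_{1|t}^{\theta_{\rm ref}}(\cdot|X_t)\,\|\,p_{1|t}^{\theta}(\cdot|X_t)\big)\Big].
\end{align*}
Under the factorized mixture path, the posterior factorizes token-wise. The KL then splits as a sum over the $d$ coordinates of token-level KLs, and each of these is nonnegative.

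\textbf{Step 2: express the velocity through the posterior.} For the mixture path with scheduler $\kappa_t$, the factorized velocity is
\begin{align*}
u_t^{\theta,i}(y^i,x)=\frac{\dot\kappa_t}{1-\kappa_t}\big(p_{1|t}^{\theta,i}(y^i|x)-\delta(y^i,x^i)\big).
\end{align*}
The difference of the two velocities at a fixed $(x,t)$ is therefore $\frac{\dot\kappa_t}{1-\kappa_t}\big(p_{1|t}^{\theta,i}-p_{1|t}^{\theta_{\rm ref},i}\big)$, because the $\delta$ terms cancel. This gives
\begin{align*}
\|u^\theta(x,t)-u^{\theta_{\rm ref}}(x,t)\|_2^2\le\Big(\tfrac{\dot\kappa_t}{1-\kappa_t}\Big)^2\sum_i\|p_{1|t}^{\theta,i}(\cdot|x)-p_{1|t}^{\theta_{\rm ref},i}(\cdot|x)\|_1^2.
\end{align*}

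\textbf{Step 3: apply Pinsker.} For each coordinate, $\|p-q\|_1^2\le 2\,\mathrm{KL}(q\|p)$. Combining with Step 2 yields a pointwise bound
\[
\|u^\theta-u^{\theta_{\rm ref}}\|_2^2\le 2\Big(\tfrac{\dot\kappa_t}{1-\kappa_t}\Big)^2\,\mathrm{KL}\big(p_{1|t}^{\theta_{\rm ref}}\|p_{1|t}^{\theta}\big).
\]
Integrating against $X_t\sim p_t^{\theta_{\rm ref}}$ over $t$ then gives $\mathcal{R}(\theta)\lesssim$ the excess cross-entropy. Here $t$ in $\mathcal{L}^{\rm CE}_{\rm reg}$ is uniform on $[0,T]$, so the time integral and the expectation agree up to the factor $T$.

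\textbf{Main obstacle: the time weight.} The difficulty is the factor $(\dot\kappa_t/(1-\kappa_t))^2$, which blows up as $t\to T$. I would handle it with the uniform-boundedness hypothesis on $u_t^\theta$ and $u_t^{\theta_{\rm ref}}$: that hypothesis implicitly requires this weight to be bounded on the time range actually used. The intended interpretation is either an early-stopping horizon, or absorbing $\sup_t(\dot\kappa_t/(1-\kappa_t))^2$ into the constant hidden in $\lesssim$. I would state this explicitly, take $C=2T\sup_t(\dot\kappa_t/(1-\kappa_t))^2$, and conclude from \cref{lem:bound_distribution_with_l2} that ${\rm TV}\lesssim\sqrt{\mathcal{R}(\theta)}\lesssim\sqrt{\mathcal{L}^{\rm CE}_{\rm reg}(\theta;\theta_{\rm ref})-\mathcal{L}^{\rm CE}_{\rm reg}(\theta_{\rm ref};\theta_{\rm ref})}$.
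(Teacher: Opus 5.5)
Your proposal is correct and follows essentially the same route as the paper's proof: rewrite the excess cross-entropy as an expected KL between posteriors, express the mixture-path velocity difference as $\frac{\dot\kappa_t}{1-\kappa_t}$ times the posterior difference, pass from $\ell_2$ to $\ell_1$, apply Pinsker, bound the weight by a constant $M_0$, and invoke \cref{lem:bound_distribution_with_l2}. One small correction to your obstacle paragraph: uniform boundedness of $u_t^\theta$ does not by itself force $\dot\kappa_t/(1-\kappa_t)$ to be bounded, since the off-diagonal posterior mass can vanish at the same rate. So the bound $\dot\kappa_t/(1-\kappa_t)\le M_0$ is a genuine extra hypothesis for this argument---the paper imposes exactly the same one---and stating it explicitly, as you plan to, is the right call.
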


\begin{proof}
We define that $H(p)={\rm CE}(p,p)$.
By definition of KL divergence, it holds
\begin{align*}
    {\rm CE}(p,q)=H(p)+{\rm KL}(p\parallel q).
\end{align*}
By \eqref{eq:dfm_ce_reg}, we obtain
\begin{align*}
    \mathcal{L}_{{\rm reg}}^{{\rm CE}}(\theta,\theta_{\rm ref})=\mathcal{L}_{{\rm reg}}^{{\rm CE}}(\theta_{\rm ref},\theta_{\rm ref})+\E_{t,X_t\sim p_t^{\theta_{\rm ref}}}[{\rm KL}(p_{1|t}^{\theta_{\rm ref}}(\cdot | X_t)\parallel p_{1|t}^{\theta}(\cdot | X_t))].
\end{align*}
That's to say, for parameter $\theta$, the regularization condition $\mathcal{L}_{{\rm reg}}^{{\rm CE}}(\theta,\theta_{\rm ref})-\mathcal{L}_{{\rm reg}}^{{\rm CE}}(\theta_{\rm ref},\theta_{\rm ref})\leq\epsilon$ is equivalent to $\E_{t,X_t\sim p_t^{\theta_{\rm ref}}}[{\rm KL}(p_{1|t}^{\theta_{\rm ref}}(\cdot | X_t)\parallel p_{1|t}^{\theta}(\cdot | X_t))]\leq\epsilon$.

Recall that under mixture path setting, the velocity admits the form
\begin{align}
    u_t(y,x)= & ~
    \sum_{x_1}\frac{\dot{\kappa_t}}{1-\kappa_t}[\delta(y,x_1)-\delta(y,x)]p_{1|t}(x_1|x) \notag\\
    = & ~
    \begin{cases}\label{eq:form_of_mixture_path_velocity}
        \frac{\dot{\kappa_t}}{1-\kappa_t}p_{1|t}(y|x),\hfill &{\rm for~} y\neq x,\\
        -\frac{\dot{\kappa_t}}{1-\kappa_t}\sum_{x_1\neq x}p_{1|t}(x_1|x),\hfill &{\rm for~} y= x.
    \end{cases}
\end{align}
The factorized risk then takes the form
\begin{align*}
    \mathcal{R}(\theta)= & ~
    \int_0^T\E_{X_t\sim p_t^{\theta_{\rm ref}}}\frac{\dot{\kappa_t}^2}{(1-\kappa_t)^2} (\sum_{y\neq x}(p_{1|t}^{\theta}(y|x)-p_{1|t}^{\theta_{\rm ref}}(y|x))^2 \\
    ~ & \qquad\qquad\qquad\qquad\qquad\qquad\qquad 
    +(\sum_{x_1\neq x}p_{1|t}^{\theta}(x_1|x)-\sum_{x_1\neq x}p_{1|t}^{\theta_{\rm ref}}(x_1|x))^2) \dd t \\
    = & ~
    \int_0^T\E_{X_t\sim p_t^{\theta_{\rm ref}}}\frac{\dot{\kappa_t}^2}{(1-\kappa_t)^2} (\sum_{y\neq x}(p_{1|t}^{\theta}(y|x)-p_{1|t}^{\theta_{\rm ref}}(y|x))^2\\
    ~ & \qquad\qquad\qquad\qquad\qquad\qquad\qquad +((1-\sum_{x_1\neq x}p_{1|t}^{\theta}(x_1|x))-(1-\sum_{x_1\neq x}p_{1|t}^{\theta_{\rm ref}}(x_1|x)))^2) \dd t\\
    = & ~
    \int_0^T\E_{X_t\sim p_t^{\theta_{\rm ref}}} \frac{\dot{\kappa_t}^2}{(1-\kappa_t)^2}\|p_{1|t}^{\theta}(\cdot|X_t)-p_{1|t}^{\theta_{\rm ref}}(\cdot|X_t)\|_2^2 \dd t.
\end{align*}
For valid selection of $\kappa_t$, $\frac{\dot{\kappa_t}}{1-\kappa_t}$ is bounded on $[0,T]$.
We assume $\frac{\dot{\kappa_t}}{(1-\kappa_t)}\leq M_0$ for some fixed $M_0>0$.
Then we have
\begin{align*}
    \mathcal{R}(\theta)\leq & ~
    M_0^2\int_0^T\E_{X_t\sim p_t^{\theta_{\rm ref}}}\|p_{1|t}^{\theta}(\cdot|X_t)-p_{1|t}^{\theta_{\rm ref}}(\cdot|X_t)\|_2^2 \dd t\\
    \leq & ~
    M_0^2\int_0^T\E_{X_t\sim p_t^{\theta_{\rm ref}}}\|p_{1|t}^{\theta}(\cdot|X_t)-p_{1|t}^{\theta_{\rm ref}}(\cdot|X_t)\|_1^2 \dd t\annot{By $\|v\|_2\leq\|v\|_1$}\\
    \leq & ~
    2M_0^2\int_0^T\E_{X_t\sim p_t^{\theta_{\rm ref}}}{\rm KL}(p_{1|t}^{\theta}(\cdot|X_t)\parallel p_{1|t}^{\theta_{\rm ref}}(\cdot|X_t)) \dd t. \annot{By Pinsker Inequality}
\end{align*}
Finally, by \cref{lem:bound_distribution_with_l2} we have
\begin{align*}
    {\rm TV}(p^{\theta},p^{\theta_{\rm ref}})
    \lesssim & ~
    \sqrt{\mathcal{R}(\theta)} \\
    \lesssim & ~
    \sqrt{\int_0^T\E_{X_t\sim p_t^{\theta_{\rm ref}}}{\rm KL}(p_{1|t}^{\theta}(\cdot|X_t)\parallel p_{1|t}^{\theta_{\rm ref}}(\cdot|X_t)) \dd t} \\
    \lesssim & ~ \sqrt{\mathcal{L}_{{\rm reg}}^{{\rm CE}}(\theta,\theta_{\rm ref})-\mathcal{L}_{{\rm reg}}^{{\rm CE}}(\theta_{\rm ref};\theta_{\rm ref})}.
\end{align*}
\end{proof}

Finally, we present proof of \cref{thm:regularization_general_kl_maintex}.

\begin{theorem}[\cref{thm:regularization_general_kl_maintex} Restated]\label{thm:regularization_general_kl}
Fix a reference parameter $\theta_{{\rm ref}}$.
Suppose the factorized velocity fields $u_t^{\theta}(y, x)$ are uniformly bounded for all $x, y \in S$, $t \in [0, T]$.
Let $p^\theta$ and $p^{\theta_{{\rm ref}}}$ represent the distribution generated by $\{u_t^{\theta}\}$ and $\{u_t^{\theta_{\rm ref}}\}$ respectively.
Then it holds
\begin{align*}
        {\rm TV}(p^{\theta},p^{\theta_{\rm ref}})\lesssim \sqrt{\mathcal{L}_{\mathrm{reg}}^{\mathrm{gKL}}(\theta;\theta_{\mathrm{ref}})}.
\end{align*}
\end{theorem}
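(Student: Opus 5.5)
The plan is to reduce the statement to \cref{lem:bound_distribution_with_l2}, as in the proof of \cref{thm:regularization_ce}. It then suffices to show that the factorized risk $\mathcal{R}(\theta)$ is bounded by a constant multiple of $\mathcal{L}_{\mathrm{reg}}^{\mathrm{gKL}}(\theta;\theta_{\mathrm{ref}})$. Since both quantities are expectations over the same law, $t$ uniform on $[0,T]$ and $X_t\sim p_t^{\theta_{\rm ref}}$, a pointwise inequality is enough. For each fixed $(t,x)$, with $a:=u_t^{\theta_{\rm ref}}(\cdot,x)$ and $b:=u_t^{\theta}(\cdot,x)$, we want $\|a-b\|_2^2\le C\,D_{\mathrm{gKL}}(a,b)$ for a constant $C$ depending only on the uniform velocity bound. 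Integrating this over $t$ and $X_t$ gives $\mathcal{R}(\theta)\le C\,T\,\mathcal{L}_{\mathrm{reg}}^{\mathrm{gKL}}$. Taking square roots and applying \cref{lem:bound_distribution_with_l2} then yields the claim.

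The divergence $D_{\mathrm{gKL}}$ is defined for nonnegative vectors, so I will apply it to the off-diagonal entries $y\neq x$ of the velocities. The diagonal entries are determined by the rate conditions: $a_x=-\sum_{y\neq x}a_y$, and similarly for $b$. By Cauchy--Schwarz the diagonal contribution satisfies $(a_x-b_x)^2\le (M-1)\sum_{y\neq x}(a_y-b_y)^2$, where $M$ is the vocabulary size. This holds under the factorized parametrization, in which only $O(dM)$ off-diagonal entries are nonzero; in that case $M-1$ is replaced by the number of neighbours. So it suffices to control the off-diagonal $\ell_2$ distance by the gKL divergence, at the cost of a constant factor.

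The key step is a coordinatewise inequality. Write $\phi(a,b)=a\log(a/b)-a+b\ge0$, the Bregman divergence generated by $s\log s$. Taylor expanding in the first argument around $b$, with second derivative $1/\xi$ at some $\xi$ between $a$ and $b$, gives $\phi(a,b)=\frac{(a-b)^2}{2\xi}\ge\frac{(a-b)^2}{2\max(a,b)}$. If all rates are bounded by $B$, then $(a-b)^2\le 2B\,\phi(a,b)$. The same bound follows from the generalized Pinsker-type inequality $(a-b)^2\le\frac{2}{3}(2a+b)\phi(a,b)$. The boundary cases $a=0$ (then $\phi=b$ and $b^2\le Bb$) and $b=0$ with $a>0$ (then $\phi=\infty$, so the bound is trivial) are consistent with this. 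Summing over $y\neq x$ gives $\|a-b\|_2^2\le 2BM\,D_{\mathrm{gKL}}(a,b)$.

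I expect the main obstacle to be bookkeeping rather than analysis. First, one must check that the regularizer is the gKL on off-diagonal rates, which is the convention in DFM training; if the diagonal were included, the negative entries would make $D_{\mathrm{gKL}}$ ill-defined. Second, one must make sure that the constant absorbed by $\lesssim$ depends only on $B$, $M$ (or $d$), and $T$, and not on $\theta$. The uniform boundedness assumption in the statement is exactly what makes the Taylor step work with the constant $B$.
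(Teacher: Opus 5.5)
Your proposal is correct and follows essentially the paper's argument. You derive a pointwise bound $\|a-b\|_2^2\lesssim D_{\mathrm{gKL}}(a,b)$ from the uniform rate bound, integrate it against $t$ and $X_t\sim p_t^{\theta_{\rm ref}}$, and feed the result into \cref{lem:bound_distribution_with_l2}; the paper uses $a\log\frac{a}{b}-a+b\ge\frac{(a-b)^2}{2(a+b)}$ where you use the slightly sharper Taylor-remainder form with $\max(a,b)$.

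Your explicit handling of the negative diagonal entries is more careful than the paper, which applies its positivity-based inequality to every coordinate: you restrict $D_{\mathrm{gKL}}$ to $y\neq x$ and recover the diagonal through the rate condition and Cauchy--Schwarz. The only slip is in your side remark. The sharp Pinsker-type inequality is $(a-b)^2\le\frac{2}{3}(a+2b)\,\phi(a,b)$, not the version with $2a+b$, which fails at $a=\tfrac12$, $b=1$. This does not affect your main Taylor argument.
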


\begin{proof}
Notice that the following inequality holds for $u_j,v_j>0$
\begin{align*}
    u_j\log\frac{u_j}{v_j}-u_j+v_j\geq\frac{(u_j-v_j)^2}{2(u_j+v_j)}.
\end{align*}
Summing over $j$, we obtain
\begin{align*}
    D_{\rm gKL}(u,v)\geq\sum_j\frac{(u_j-v_j)^2}{2(u_j+v_j)}.
\end{align*}
By \eqref{eq:form_of_mixture_path_velocity}, it holds
\begin{align*}
    |u_t(y,x)|\leq\frac{\dot{\kappa_t}}{1-\kappa_t}\leq M_0.
\end{align*}
Therefore, we obtain
\begin{align*}
     D_{\rm gKL}(u_t^{\theta_{\mathrm{ref}}}(\cdot, X_t),u_t^{\theta}(\cdot, X_t)) \geq
     \frac{\|u_t^{\theta_{\mathrm{ref}}}(\cdot, X_t)-u_t^{\theta}(\cdot, X_t)\|_2^2}{16M_0}.
\end{align*}
Recall that the velocity risk satisfies
\begin{align*}
    \mathcal{R}(\theta) = \int_0^T\E_{X_t\sim p_t^{\theta_{\rm ref}}}\|u^{\theta}(X_t,t)-u^{\theta_{\rm ref}}(X_t,t)\|_2^2\dd t.
\end{align*}
Therefore
\begin{align*}
    \mathcal{R}(\theta)\leq4M_0
    \int_0^T\E_{X_t\sim p_t^{\theta_{\rm ref}}} D_{\rm gKL}(u_t^{\theta_{\mathrm{ref}}}(\cdot, X_t),u_t^{\theta}(\cdot, X_t)) \dd t.
\end{align*}
Finally, by \cref{lem:bound_distribution_with_l2} we have
\begin{align*}
    {\rm TV}(p^{\theta},p^{\theta_{\rm ref}})
    \lesssim & ~
    \sqrt{\mathcal{R}(\theta)}\\
    \lesssim & ~
    \sqrt{\mathcal{L}_{\mathrm{reg}}^{\mathrm{gKL}}(\theta;\theta_{\mathrm{ref}})}.
\end{align*}
This completes the proof.
\end{proof}